\newtheorem{definition}{Definition}
\def\expandafter\normalsize\expandafter{%
    \normalsize
    \setlength\abovedisplayskip{2pt}
    \setlength\belowdisplayskip{2pt}
    \setlength\abovedisplayshortskip{0pt}
    \setlength\belowdisplayshortskip{0pt}
}
\newcommand{\Junshan}[1]{  \ifthenelse{\boolean{showcomments}}
	{ \textcolor{red}{(Junshan says:  #1)}} {}  }
\newtheorem{lemma}{Lemma}
\newtheorem{assumption}{Assumption}
\def \n2{{N_0 \over 2}}
\def \tphi{\tilde{\phi}}
\def \ttheta{\tilde{\theta}}
\def \tw{\tilde{w}}
\def\prox{\mbox{\textrm{prox}}}
\def \h5{\hspace{0.5in}}
\begin{document}

%%%%%%%%% TITLE
\title{MetaGater: Fast Learning of Conditional Channel Gated Networks via  Federated Meta-Learning}

% \author{First Author\\
% Institution1\\
% Institution1 address\\
% {\tt\small firstauthor@i1.org}
% % For a paper whose authors are all at the same institution,
% % omit the following lines up until the closing ``}''.
% % Additional authors and addresses can be added with ``\and'',
% % just like the second author.
% % To save space, use either the email address or home page, not both
% \and
% Second Author\\
% Institution2\\
% First line of institution2 address\\
% {\tt\small secondauthor@i2.org}
% }
\author{
Sen Lin\textsuperscript{\rm 1},
Li Yang\textsuperscript{\rm 1},
Zhezhi He\textsuperscript{\rm 2},
Deliang Fan\textsuperscript{\rm 1},
Junshan Zhang \textsuperscript{\rm 1}

\\\textsuperscript{\rm 1}School of ECEE, Arizona State University\\

\textsuperscript{\rm 2}Department of Computer Science and Engineering, Shanghai Jiao Tong University\\
\{slin70, lyang166, dfan, junshan.zhang\}@asu.edu, zhezhi.he@sjtu.edu.cn
}

\maketitle

%%%%%%%%% ABSTRACT
\begin{abstract}
 While deep learning has achieved phenomenal successes in many AI applications, its enormous model size and intensive computation requirements pose a formidable challenge to the deployment in   resource-limited nodes.  There has recently been an increasing interest in computationally-efficient learning methods, e.g., quantization, pruning and channel gating. However, most existing techniques cannot  adapt to  different tasks quickly. In this work, we advocate a holistic approach to jointly train  the backbone network  and the channel gating which enables dynamical selection of a subset of filters for more efficient local computation given the data input. Particularly, we develop a federated meta-learning approach to jointly learn good meta-initializations for both backbone networks and gating modules, by making use of the model similarity across learning tasks on different nodes. In this way, the learnt meta-gating module effectively captures the important filters of a good meta-backbone network, based on which  a task-specific conditional channel gated network can be quickly adapted, i.e., through one-step gradient descent, from the meta-initializations in a two-stage procedure using new samples of that task. The convergence of the proposed federated meta-learning algorithm is established under mild conditions. Experimental results corroborate the effectiveness of our method in comparison to related work.
\end{abstract}

\section{Introduction}

The last decade has witnessed an explosive boost in deep learning, especially Deep Neural Networks (DNN), leading to phenomenal successes in many artificial intelligence applications, e.g., speech recognition~\cite{lecun2015deep}, image classification~\cite{he2016deep,rawat2017deep}, object detection~\cite{szegedy2013deep,lin2017focal} and etc. Nevertheless, DNNs nowadays have very complex structures, e.g., a larger model depth and width for a more expressive representation power, which is not amendable to the deployment in  resource-constrained settings (e.g., edge servers or robots \cite{lin2020edge}). This challenge has spurred significant effort on computationally-efficient learning methods recently, including weight quantization~\cite{han2015deep,hubara2017quantized,he2019simultaneously}, weight pruning~\cite{han2015learning,wen2016learning,yang20dac} and channel gating~\cite{wang2018skipnet,chen2019you,abati2020conditional}. Notably, both weight pruning and channel gating aim to effectively select only a  portion of model parameters, i.e., a sub-network, for local computation (inference) with minimal performance loss, through a sampling mask on either the weights directly, or the channels.

%With the explosive growth of IoT nodes, a revolutionary shift of data source is happening in this big data era from the mega-scale cloud datacenters to the increasingly widespread edge nodes (e.g., smartphones). Due to the stringent requirements on performance, cost and privacy, the conventional wisdom of uploading the data bulks from the edge nodes to the cloud clearly would not work well, and it is anticipated that many Artificial Intelligence (AI) applications need to process the data and run locally at the edge in a real-time manner. 
%Nevertheless, with limited data and constrained computational resources, it is highly challenging and nontrivial for a single edge node to achieve real-time edge intelligence, as many AI applications require high computational power that greatly outweighs its capacity. 

\begin{figure}
\centering
\includegraphics[scale=0.143]{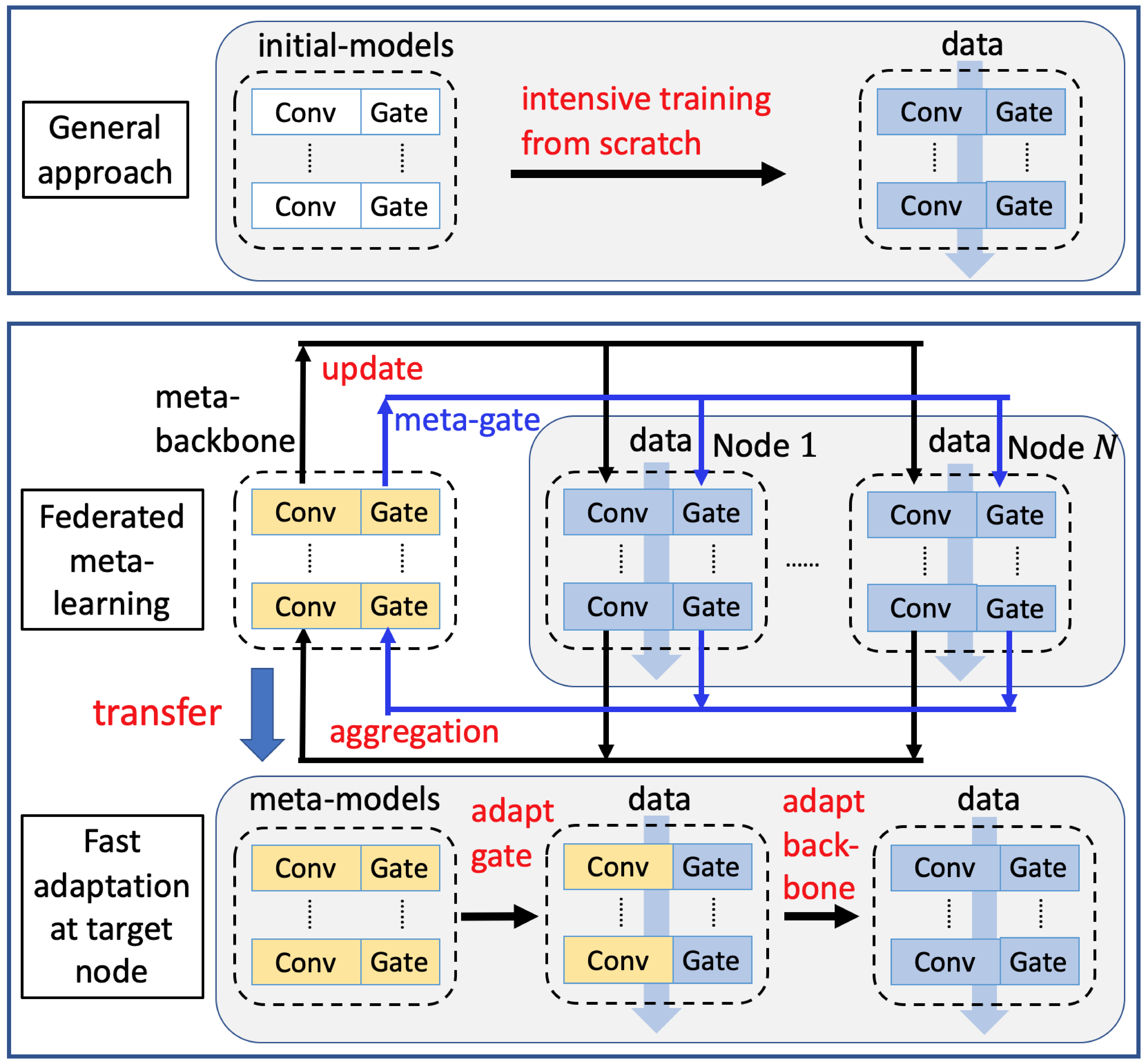}
\caption{Illustration of  the proposed framework MetaGater. 
Top: general learning of conditional channel gated networks. Bottom: fast learning of conditional channel gated networks via joint federated meta-learning of backbone and gating module.}
\label{Fig:outline}
\vspace{-0.5cm}
\end{figure}

However, most aforementioned studies are afflicted with some of the following limitations: 
1) \emph{Extensive training cost or poor performance.} Generally, the learning of an efficient subnet for a resource-limited node requires substantial pre-training on a large target dataset, which often takes place in a powerful cloud datacenter~\cite{yang20dac}. 
Such a learning strategy, however, may not be practical due to the concerns on cost and privacy, as
a significant amount of data need to be transmitted from the node to the server.
It is worth noting that some recent works~\cite{lee2018snip,ramanujan2020s} propose to quickly prune a randomly initialized DNN, and then fine-tune the subnet for eliminating the need of pre-training, which however suffers from poor performance and may not actually speed up the inference, as a  drawback for unstructured pruning \cite{wen2016learning}. 
(2) \emph{Limited adaptability to different tasks.} The fact that different nodes usually have different local data distributions \cite{mcmahan2017communication,kairouz2019advances,sattler2019robust}, implies that a global background model is not sufficient to guarantee universally satisfactory learning performance across different tasks. Accordingly, the subnet selection should also vary for different tasks. Yet, most works ~\cite{mallya2018piggyback,abati2020conditional} consider a common background model and require non-trivial re-training of masks with massive training data when applied to different tasks.
\emph{In a nutshell, for effective learning at resource-limited nodes (devices),
it is desirable for the learning of subnets for each new task to be able to quickly adapt while incurring minimum training cost, akin to cognitive learning by human beings.}

To tackle these challenges, we propose MetaGater, a fast learning framework for conditional channel gated networks by leveraging the knowledge from many nodes, where the backbone network model at each node   is associated with a task-specific \emph{channel gating module}. 
The channel gating module can generate a data-dependent mask, i.e., a binary vector, for each layer in the backbone network, which dynamically selects a subset of \emph{filters} to participate into the computation conditioned on the data input, thereby improving the computation efficiency. 
Since the learning tasks across different nodes often share some similarity~\cite{argyriou2008convex,smith2017federated,aipe2018similarhits}, \emph{we advocate a  federated meta-learning approach to jointly learn good meta-initialization for both the backbone networks and the channel gating modules}, via the collaboration of many nodes in a distributed manner. The learnt meta-backbone network and the meta-gating module are then transferred from the cloud to a target node for fast adaptation (as shown in Fig. \ref{Fig:outline}). Since the meta-gating module effectively captures the important filters of a good meta-backbone network and hence  sparsity structure across tasks, it can  achieve the agile adaptability at different new tasks
by  quickly learning a task-specific conditional channel gated network using corresponding new data samples.
% \begin{itemize}
%     \item \emph{Fast learning ability.} The learnt meta-gating module effectively captures the important filters of a good meta-backbone network. Instead of learning from scratch, a target node can quickly learn a task-specific conditional channel gated network, i.e., through one-step gradient descent, from the meta-initialization  using local samples of a new task.
    
%     \item \emph{Agile Model adaptability.} Based on the global meta backbone network model and meta gating modules, different tasks can quickly obtain not only a task-specific backbone network but also a task-specific gating module using its own dataset.
    
%     \item {\bf Sen:  this does not strike me ss a significant contributions} 
%     \emph{Federated learning.} Different from the conventional wisdom of collecting and processing all the data in a powerful server, we study a more realistic federated learning setup, which is particularly important for applications such as edge computing~\cite{lin2020edge}
%     where the data are kept and processed locally at each node due to the stringent requirements on cost and privacy. 
% \end{itemize}

The main contributions of this work can be summarized as follows.

   (1) To achieve fast and adaptive learning of subnets on resource-limited nodes, we propose MetaGater, a fast learning framework of conditional channel gated networks via federated meta-learning, where good meta-initializations for both backbone networks and gating modules are jointly learnt by leveraging knowledge from related tasks. A task-specific conditional channel gated network for a new task can then be learned quickly  from the meta-initialization in a two-stage procedure, using data samples of the new task.
    
    (2) To efficiently solve the federated meta-learning problem with non-smooth objective functions, we develop a novel approach based on accelerated proximal gradient descent with inexact solutions to the local problems. 
    Particularly, we use accelerated gradient descent for the meta-backbone network and accelerated proximal gradient descent for the meta-gating module, in the same spirit with the Nesterov's method~\cite{nesterov2013introductory}. By characterizing and controlling the estimation error introduced by the inexact solutions, we establish the convergence of the proposed federated meta-learning algorithm for non-convex functions under mild conditions, and show that an $\epsilon$-first order stationary point can be obtained in at most $O(\epsilon^{-1})$ communication rounds. 
    
    (3) We conduct  experiments  to evaluate the effectiveness of MetaGater. Specifically, the experiments on various datasets showcase that the proposed federated meta-learning approach clearly outperforms existing baselines in terms of  accuracy and efficiency. Since this study focuses on the fast learning performance based on distributed learning, most existing methods based on centralized pre-training on a large target dataset cannot directly serve as the baseline. For a fair comparison, we develop a new baseline MetaSNIP by integrating federated meta-learning with one state-of-the-art fast pruning approach SNIP \cite{lee2018snip}, where we apply SNIP to the meta-backbone network obtained by using the  federated meta-learning approach. Our experimental results indicate that MetaGater is able to quickly obtain a task-specific subnet with higher accuracy, and achieves a larger diversity in the task-model sparsity after fast adaptation, compared with MetaSNIP. This implies that MetaGater can successfully find the joint model of the meta-backbone network and meta-gating module that is sensitive to changes in the tasks, such that quick adaptation in the model parameters will lead to a good task-specific channel gated network for efficient inference.

\section{Related Work}

\paragraph{Federated meta-learning.} Meta-learning is a promising solution for fast learning, where one gradient-based meta-learning algorithm called MAML~\cite{finn2017model} has become a representative method. The idea of MAML is to learn a model initialization based on many related tasks, such that even one-step gradient descent from this initialization can achieve good performance for a new task using a few data samples from that task. A lot of work have been proposed to understand~\cite{antoniou2018train,fallah2020convergence,ji2020multi} and improve upon MAML~\cite{li2017meta,nichol2018first,zhou2019efficient}. 

Recently, the marriage of federated learning and meta-learning has garnered a lot of attention, giving rise to a new research direction, namely federated meta-learning. In particular, the empirical successes of such an integration have been corroborated in~\cite{chen2018federated,jiang2019improving}. From a theoretic point of view, the work~\cite{lin2020collaborative} establishes the convergence of federated meta-learning for strongly convex functions and investigates the impact of task similarity. Besides, \cite{fallah2020personalized} studies the case of non-convex functions with stochastic gradient descent. All the works above have studied federated meta-learning based on MAML-type methods. Recently, a different federated meta-learning approach is proposed in~\cite{dinh2020personalized}, based on a proximal meta-learning method with moreau envelopes. However, to our best knowledge, we are the first to \emph{study the agile adaptability and  computational efficiency when federated meta-learning is leveraged to jointly learn the backbone network and the gating module, and further analyze the convergence performance for non-smooth functions in this setting}. More importantly, the federated meta-learning approach proposed in this work clearly outperforms the previous studies as shown in the experiments.

\paragraph{Channel gating and weight pruning.} 
The idea of utilizing data-dependent channel gating module~\cite{bengio2013deep,sigaud2015gated,hua2019channel,gao2018dynamic,hua2019boosting} to improve the computational efficiency has recently emerged in the literature. Specifically, \cite{chen2019you} proposes Gaternet to train a separate gating network to select filters for each layer in the backbone network. To increase the amount of conditional features actually learned, a batch-shaping technique is introduced in~\cite{bejnordi2019batch} to learn conditional channel gated networks. \cite{abati2020conditional} further applies the channel gating module to address the catastrophic forgetting in task-aware continual learning, by predicting the current task in a set of pre-defined tasks. 
The works in network pruning can be traced back to early 1990s~\cite{reed1993pruning}, where sparsity enforcing penalty terms (e.g., $\mathcal{L}_0$ and $\mathcal{L}_1$ norm)~\cite{weigend1991generalization, ishikawa1996structural} and saliency criterions like weight sensitivity~\cite{karnin1990simple} are widely used. Recently, using magnitude of weights~\cite{han2015deep} as the criterion has achieved significant successes and become a standard method for network pruning, which however needs expensive prune-retrain cycles. SNIP~\cite{lee2018snip} proposes a fast pruning method based on connection sensitivity without pre-training. \cite{ramanujan2020s} directly finds optimal subnets in randomly weighted neural networks. Nevertheless, it is difficult for these unstructured pruning methods to reduce the inference time on hardware due to the highly irregular sparsity patterns. Therefore, many approaches in structured pruning~\cite{wen2016learning,li2016pruning,liu2017learning} have been proposed to \emph{prune weights grouped in regular shapes, such as channels or kennels}.

Note that an independent and concurrent work~\cite{song2020rapid} also proposes to utilize meta-learning for rapid structural pruning of neural networks. We highlight the main differences below: 1) \cite{song2020rapid} relies on a centralized meta-learning method where the nodes are required to submit data to a central platform, whereas we consider a more realistic distributed setup and propose a new federated meta-learning approach to fit the specific efficiency problem in our work. 2) \cite{song2020rapid} takes a stochastic approach and learns a task-specific Bernoulli distribution for mask generation, which however could possibly generate masks that lead to significant performance degradation. In stark contrast, we develop a deterministic approach by learning a task-specific channel gating module, and also provide theoretic foundations by carrying out a thorough convergence analysis of the proposed federated meta-learning algorithm.

\section{Methodology}

In this section, we first present the problem formulation for learning task-specific conditional channel gated networks, and then introduce the proposed federated meta-learning for conditional channel gated networks through the collaboration among a set of nodes, followed by the fast adaptation procedure at a target node.

\subsection{Problem formulation}

As alluded to earlier, we integrate a task-specific channel gating module with the backbone network at each node, so as to improve the computational efficiency with a data-dependent channel sampling mask.

\textbf{Backbone network.}
For a node $i$, let $\ttheta^i$ denote the model parameters for 
the backbone neural network with $J$ convolutional layers, which serves as the main model that extracts features from the data input and makes predictions. 

 \begin{figure}
\centering
\includegraphics[scale=0.165]{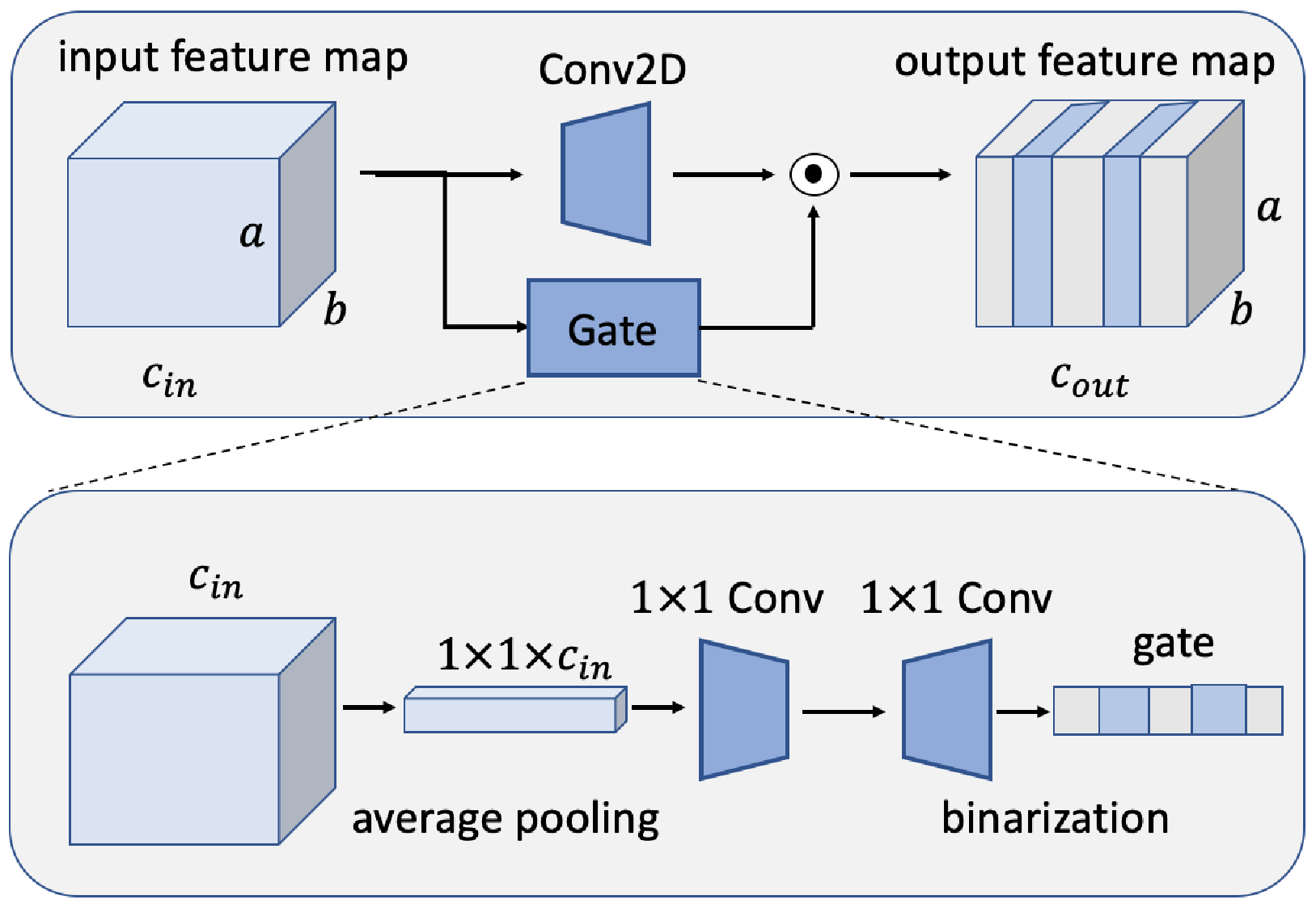}
\caption{The channel gating module for a convolution layer.}
\label{Fig:fml_att}
\vspace{-0.5cm}
\end{figure}

\textbf{Channel gating module.} 
Let $\tphi^i$ denote the model parameters for the channel gating module $Q_i=[Q_i^1, ..., Q_i^J]$ at node $i$. As depicted in~\cref{Fig:fml_att}, let 
$o^j\in\mathbb{R}^{c_\textrm{in}^j,a,b}$ and $o^{j+1}\in\mathbb{R}^{c_\textrm{out}^j,a,b}$ be the input and output feature maps of the $j$-th convolutional layer in the backbone network, respectively. 
Conditioned on the input feature map $o^j$, the layer-wise channel gating module $Q_i^j$ generates a channel mask vector with binary elements, to determine which channels should be activated for the given input. As a result, a sparse feature map $\hat{o}^{j+1}$, instead of $o^{j+1}$, is forwarded to the next layer, only with the channels activated by the gating module $Q_i^j$, i.e.,
{\small
\begin{equation}
    \hat{o}^{j+1}=Q_i^j(o^j)\odot o^{j+1}
\end{equation}
}%
where $Q_i^j(o^j)=[q_1^j,...,q_{c_{out}^j}^j]$, $q_i^j\in\{0,1\}$ and $\odot$ represents the channel-wise multiplication. Each gating module consists of Multi-Layer Perception (MLP) with a single hidden layer featuring 16 units, followed by a ReLU activation function. To generate the binary mask, we utilize the binarization function~\cite{courbariaux2016binarized}, and estimate the gradient via straight-through estimator (STE)~\cite{bengio2013estimating} for the forward and backward paths, respectively. More details about the channel gating module are described in the appendix. 

% Therefore, each edge node $i$ also needs to learn a task-specific gating module $G_i=\{G_i^1,...,G_i^J\}$ for total $J$ layers. 

%  Compared with \cite{abati2020conditional}, we do not need to predict the task, and by learning a meta-initialization of the gating module across multiple tasks, the gating module can be generalized over the new tasks out of the pre-defined task set in \cite{abati2020conditional}.
 
\textbf{Learning of task-specific channel gated networks.}
%For edge node $0$, denote  $\tw^0\triangleq (\tphi^0,\ttheta^0)$ as the complete node-specific model parameters. 
% For a learning task with the dataset $D_i=\{(\mathbf{x}^i_k,\mathbf{y}^i_k)\}_{k=1}^K$ at node $i$, we define the loss function as 
% \begin{align*}
%     L_i(\tphi^i,\ttheta^i)\triangleq \frac{1}{|D_i|}\sum_{(\mathbf{x}^i_k,\mathbf{y}^i_k)\in D_i} l(\tphi^i, \ttheta^i;(\mathbf{x}^i_k,\mathbf{y}^i_k))
% \end{align*}
% for some standard loss $l$, e.g., cross-entropy loss. 
%Given the prior hypothesis parameters $(\phi,\theta)$ transferred from the platform, the node-specific parameters $(\tphi^0,\ttheta^0)$ at a target node $0$ can be quickly learned based on local dataset $D_0$ by solving the following proximal update:
For a target node $0$, let $L_0(\tphi^0,\ttheta^0)$ denote the empirical loss over the local dataset $D_0=\{(\mathbf{x}^0_k,\mathbf{y}^0_k)\}_{k=1}^K$:
{\small
\begin{equation}\label{loss}
    L_0(\tphi^0,\ttheta^0)\triangleq \frac{1}{|D_0|}\sum_{(\mathbf{x}^0_k,\mathbf{y}^0_k)\in D_0} l(\tphi^0, \ttheta^0;(\mathbf{x}^0_k,\mathbf{y}^0_k))
\end{equation}
}%
for some standard loss $l$, e.g., cross-entropy loss. Then, 
the joint learning of the backbone network and the channel gating module can be formulated as the following regularized optimization problem:
{\small
\begin{equation}\label{localobject}
    \min_{\tphi^0,\ttheta^0} ~ L_0(\tphi^0,\ttheta^0)+\frac{\lambda}{2}\|\tphi^0-\phi\|^2_2+\frac{\lambda}{2}\|\ttheta^0-\theta\|^2_2
\end{equation}
}%
where $\lambda$ is some constant penalty parameter, $\phi$ and $\theta$ are some prior model parameters for the gating module and the backbone network, respectively. Clearly, directly solving \eqref{localobject}, i.e., searching for the optimal task-specific conditional channel gated network, is computationally challenging in general, and possibly suffers from poor performance if only a small local dataset is used for training.
On the other hand, the quality of the regularizer plays an important role in controlling the performance of learnt conditional channel gated network, in the sense that the closer the prior parameters are to the task-specific optimal parameters, the better the learning performance is. This regularized learning problem is also intimate with biased regularized hypothesis transfer learning~\cite{zhou2019efficient}, which has thoroughly demonstrated its efficiency in many applications~\cite{fei2006one,wang2014flexible,kuzborskij2017fast}.
\emph{Therefore, instead of directly solving \eqref{localobject} as in prior works~\cite{chen2019you,abati2020conditional}, we take a different approach by learning a more informative prior regularizer, such that a quick adaptation at the target node through gradient descent can lead to a good approximation of the optimal subnet.}

\subsection{Joint learning of meta-backbone network and meta-gating module via federated meta-learning} 

Since the learning tasks across different nodes usually share some similarity, to obtain a good prior regularizer for~\eqref{localobject}, we develop a new federated meta-learning approach, not only to learn a meta-backbone network but also to learn a meta-gating module, by leveraging the knowledge from related tasks on a set of nodes.

\begin{figure}
\centering
\includegraphics[scale=0.18]{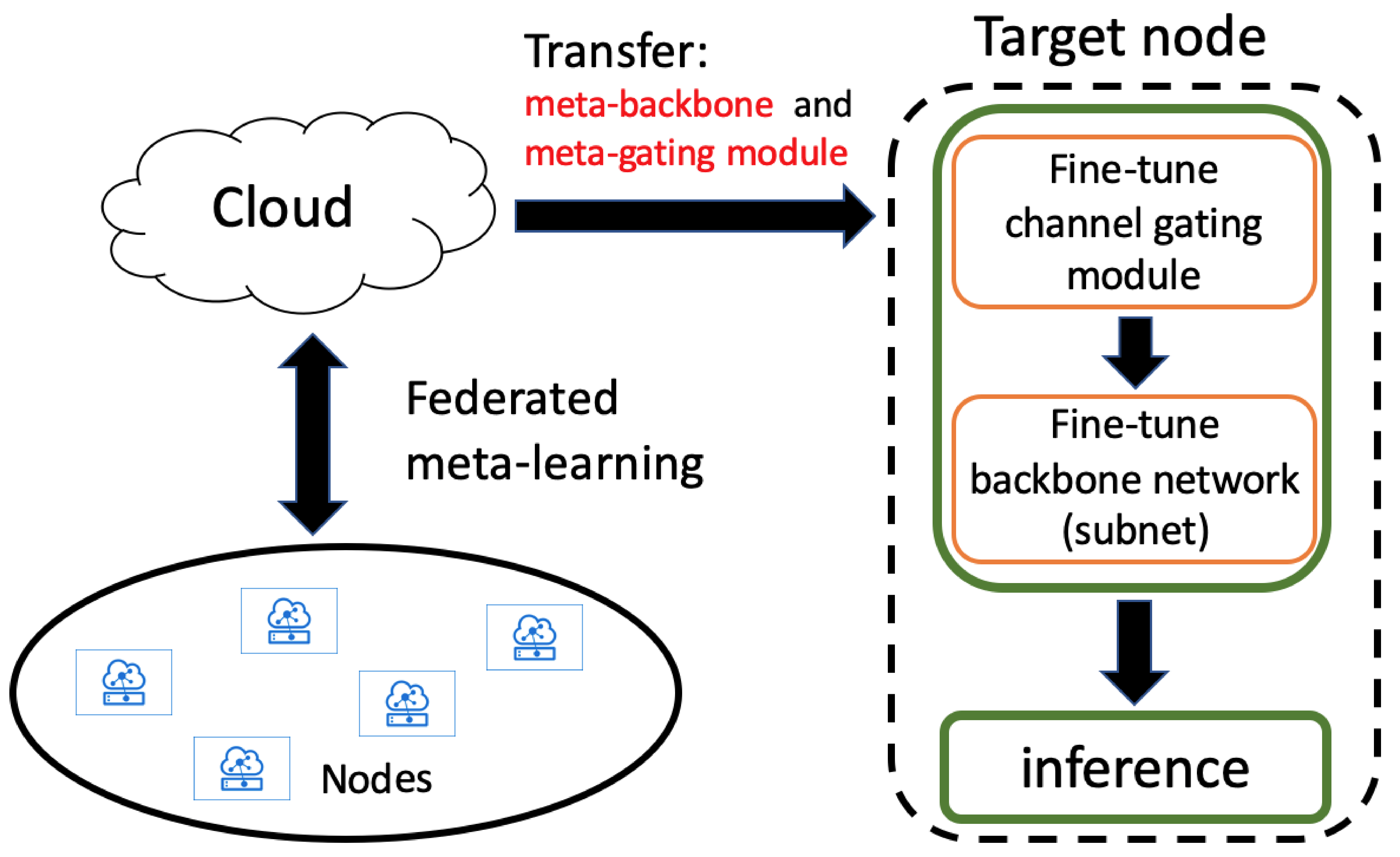}
\caption{A fast learning framework of conditional channel gated networks via federated meta-learning.}
\label{Fig:fml_frame}
\vspace{-0.5cm}
\end{figure}

As illustrated in~\cref{Fig:fml_frame}, a set $\mathcal{S}$ of nodes, each $i\in\mathcal{S}$ with a task and a local dataset $D_i$, participate into federated meta-learning to jointly learn meta-models for both backbone networks and channel gating modules, which are then transferred via the cloud to a target node for fast adaptation. Let $|\mathcal{S}|=N$.
%Specifically, consider a set $\mathcal{S}$ of nodes with $|\mathcal{S}|=N$, and that each $i\in\mathcal{S}$ has a learning task with a small local dataset $D_i$. 
Intuitively, a good meta-model $(\phi,\theta)$ should have the following properties:
\begin{itemize}
    \item For any $i\in\mathcal{S}$,
    the meta-model should be `close'  to its task-specific optimal backbone network and gating module, such that the loss is minimized when solving a similar local problem as~\eqref{localobject}. In this way, the learnt meta-model implicitly captures the way to quickly learn the optimal task-specific conditional channel gated network with local data across all nodes in $\mathcal{S}$. 
    
    \item In general, a quick adaptation through gradient descent is not sufficient to prompt the sparsity of the task-specific gating module. Instead, it is more effective to start with a meta-gating module with structured sparsity (which also serves as the initialization of fast adaptation), for better computational efficiency.
\end{itemize}
Therefore, the objective of federated meta-learning can be mathematically formulated as follows:
{\small
\begin{align}\label{metaobject}
    \min_{\phi,\theta}~F(\phi,\theta)&\hspace{-0.05cm}=\hspace{-0.05cm}H(\phi)+\frac{1}{N}\sum_{i=1}^N \min_{\tphi^i,\ttheta^i}~G_i(\tphi^i,\ttheta^i)\\
    s.t.~G_i(\tphi^i,\ttheta^i;\phi,\theta)&\hspace{-0.05cm}=\hspace{-0.05cm}L_i(\tphi^i,\ttheta^i)+\frac{\lambda}{2}\|\tphi^i-\phi\|^2_2+\frac{\lambda}{2}\|\ttheta^i-\theta\|^2_2,\nonumber
\end{align}
}%
where $L_i(\tphi^i,\ttheta^i)$ is the empirical loss defined in a similar way with \eqref{loss} for node $i$, and
$H(\cdot)$ is some sparsity prompting function of the meta-gating module, such as $\mathcal{L}_1$-norm and Group Lasso~\cite{yuan2006model}. It is worth noting that compared with the popular gradient-based meta-learning method MAML~\cite{finn2017model}, such a regularization-based meta-learning formulation can fully leverage the higher-order information~\cite{zhou2019efficient} of the objective function~\eqref{metaobject}, leading to more informative meta-backbone networks and meta-gating modules.

 \emph{The next key question is how to efficiently solve problem~\eqref{metaobject} in a distributed manner}. To answer this question, there are some problems to be solved: 1) Computationally-efficient methods usually result in  performance degradation, compared with that of the entire backbone network. To guarantee the performance of conditional channel-gated networks after fast adaptation, a better meta-backbone network is needed, given a fixed communication budget (between the cloud and nodes) which is often a significant bottleneck in wireless networks. 
2) Generally $H(\cdot)$ is a non-smooth function such that the classical gradient descent may not work well.

To address the above problems, we develop a new federated meta-learning approach based on accelerated proximal gradient descent \cite{ghadimi2016accelerated}, as summarized in~\cref{alg1}. In what follows, we highlight several key aspects in our algorithm design.
\begin{itemize}
    \item Generally, it is computationally expensive to find a local minimizer to problem \eqref{localmin} (in \cref{alg1}) at each node. Instead, we run the vanilla gradient descent for several steps to approximately solve \eqref{localmin}, for the case when $G_i$ is smooth for a smooth local loss $L_i$. In this way, \cref{alg1} would obtain a meta-model $(\phi,\theta)$ such that the conditional channel gated network, obtained after fast adaptation via several gradient descent steps at each node, can achieve good learning performance.
    \item Unlike the meta-model update in MAML which includes the computation of Hessian, the global update of meta-models in~\cref{alg1} is as easy to implement as first-order meta-learning algorithms, e.g., Reptile~\cite{nichol2018first}. Particularly, let $(\tphi^{i*}_t,\ttheta^{i*}_t)=\arg\min_{(\tphi^i,\ttheta^i)} G_i(\tphi^i_{t},\ttheta^i_{t};\phi_t^{pr},\theta_t^{pr})$. For the $t$-th iterate $(\phi_t^{pr},\theta_t^{pr})$ of the meta-model in \cref{alg1}, it can be shown~\cite{zhou2019efficient,t2020personalized} that $(\tphi^{i*}_t,\ttheta^{i*}_t)$ satisfies:
    {\small
    \begin{align*}
        \lambda(\phi_t^{pr}-\tphi^{i*}_{t})=&\nabla_{\phi_t^{pr}} G_i(\tphi^{i*}_{t},\ttheta^{i*}_{t};\phi_t^{pr},\theta_t^{pr}),\\
        \lambda(\theta^{pr}_t-\ttheta^{i*}_{t})=&\nabla_{\theta_t^{pr}} G_i(\tphi^{i*}_{t},\ttheta^{i*}_{t};\phi_t^{pr},\theta_t^{pr}),
    \end{align*}
    }%
    which indicate that the global updates of meta-model (step 8 and 9 in~\cref{alg1}) follow an approximate gradient direction with respect to (w.r.t.) the meta-learning objective~\eqref{metaobject}.
    \item Since the meta-learning objective $F$ is a non-smooth function w.r.t. $\phi$, we apply proximal gradient descent for the global update of the meta-gating module. More specifically, the proximal operator \cite{parikh2014proximal} of function $H$ is defined by
    \vspace{-0.1cm}
    {\small
    \begin{equation*}
    \vspace{-0.1cm}
        \prox_{\eta H}(v)=\arg\min_x \left(H(x)+\frac{1}{2\eta}\|x-v\|^2_2\right)
    \end{equation*}
    }%
    for $\eta>0$. When $H=0$, it is clear that $\prox_{\eta H}(v)=v$. Hence, if $v$ is a standard gradient step as in step $9$ of Algorithm \ref{alg1}, $\prox_{\eta H}(v)$ can be interpreted as trading off minimizing $H$ and being close to $v$.
    \item Since the global updates of meta-models indeed follow an approximate gradient direction w.r.t \eqref{metaobject} (gradient descent for meta-backbone network and proximal gradient descent for meta-gating module), we resort to a general acceleration technique \cite{ghadimi2016accelerated} for the global updates to improve the performance of federated meta-learning. When $\beta_t=\eta_t\alpha_t$, the global updates of meta-models fall into the variants of the well-known Nesterov's method~\cite{nesterov2013introductory}.
\end{itemize}

\begin{algorithm}[t]
  %\footnotesize
  \small
	\caption{Joint federated meta-learning}
	\label{alg1}
 	\begin{algorithmic}[1]
		  %\State Cloud randomly initializes $(\phi_0,\theta_0)$ and sends it to all edge nodes;
		  \State Set initial models $\phi_0^{ag}=\phi_0$ and $\theta_0^{ag}=\theta_0$;
		  \For{$t= 1, 2, ..., T$}
		  \State Update $\phi^{pr}_t=\alpha_t \phi_{t-1}+(1-\alpha_t)\phi_{t-1}^{ag}$, $\theta^{pr}_t=\alpha_t \theta_{t-1}+(1-\alpha_t)\theta_{t-1}^{ag}$;
		    \State  Send $\phi^{pr}_t$ and $\theta^{pr}_t$ to the nodes;
		    \For{each node $i\in\mathcal{S}$}
		       \State Update the backbone network and the gating module using gradient descent to achieve an approximate minimizer $(\tphi^i_{t},\ttheta^i_{t})$ to the following problem		      \begin{align}\label{localmin}
	          \min_{\tphi^i,\ttheta^i}~ G_i(\tphi^i,\ttheta^i;\phi_t^{pr},\theta_t^{pr}),
		        \end{align}
		        such that $\|\nabla G_i(\tphi^i_{t},\ttheta^i_{t};\phi_t^{pr},\theta_t^{pr})\|^2\leq \xi_t$;
		   \State Send $(\tphi^i_{t},\ttheta^i_{t})$ to the cloud;
		   \EndFor
		    \State  Globally aggregate $\nabla_{\phi,t}=\frac{\lambda}{N}\sum_{i=1}^N (\phi^{pr}_t-\tphi^i_{t})$, and $\nabla_{\theta,t}=\frac{\lambda}{N}\sum_{i=1}^N (\theta^{pr}_t-\ttheta^i_{t})$;
		   \State Update global models
		   \vspace{-0.2cm}
		    \begin{align*}
		 \phi_{t}=\prox_{\eta_t H}\left(\phi_{t-1}-\eta_t \nabla_{\phi,t}\right),& ~ \theta_t=\theta_{t-1}-\eta_t \nabla_{\theta,t};\\
		 \phi^{ag}_{t}=\prox_{\beta_t H}\left(\phi^{pr}_t-\beta_t \nabla_{\phi,t}\right),& ~ \theta^{ag}_t=\theta^{pr}_{t}-\beta_t \nabla_{\theta,t}.
		 \vspace{-0.2cm}
		    \end{align*}
		\EndFor
		\State Set $\phi\leftarrow \phi^{pr}_T$, $\theta\leftarrow \theta^{pr}_T$.
	\end{algorithmic}
\end{algorithm}

\subsection{Fast adaptation at the target node}

Based on the meta-backbone network $\phi$ and the meta-gating module $\theta$ transferred from the cloud, the target node $0$ is able to quickly learn a task-specific conditional channel gated network. Different from the simultaneous updates of both backbone networks and gating modules in federated meta-learning,  the gating module and the backbone network are updated once sequentially, at the target node 
by following a two-stage procedure using its local dataset $D_0$:

\textbf{Stage I:} Fix the task-specific backbone network as the meta-backbone network $\theta$, and update the task-specific gating module via one-step gradient descent  from the meta-gating module $\phi$ w.r.t. \eqref{localobject} using dataset $D_0$. Note that $\phi$ has effectively captured the important filters of a good meta-backbone network by leveraging the knowledge among different nodes in $\mathcal{S}$. Therefore, one single gradient update from $\phi$ by incorporating the local information is able to tune the gating module in a way that important channels for the specific task can be quickly selected, thus significantly reducing the  network size for updating.

\textbf{Stage II:} Given the adapted channel gating module $\tphi^0$, which determines the set of filters for local computation,  we next fine-tune the subnet via one-step gradient descent from the corresponding subnet in the meta-backbone model $\theta$, using the local dataset $D_0$. In this way, the training cost is further reduced as only a part of the backbone network gets involved in the single forward pass, even more efficient than fast-pruning methods, such as SNIP where at least one single forward pass needs to perform on the entire backbone network.  

Therefore, a task-specific conditional channel gated network can be quickly obtained at the target node for efficient inference.

\section{Theoretical Analysis}

In this section, we present the convergence analysis of Algorithm \ref{alg1} for a general non-convex local loss function.

First, a key observation here is that from the perspective of convergence analysis, the updates of the meta-backbone networks, i.e., vanilla gradient descent, in step $9$ of Algorithm \ref{alg1} are equivalent to the following proximal gradient descent:
\vspace{-0.1cm}
{\small
\begin{align*}
    \theta_t&=\prox_{\eta_t H}(\theta_{t-1}-\eta_t\nabla_{\theta,t}),\\
    \theta_t^{ag}&=\prox_{\beta_t H}(\theta_t^{pr}-\beta_t\nabla_{\theta,t}),
    \vspace{-0.1cm}
\end{align*}
}%
because $H$ is a function only of the meta-gating module $\phi$ and $\nabla_{\theta} H(\phi)=0$. Consequently, we can analyze the meta-backbone network $\theta$ and the meta-gating module $\phi$ together, and examine the convergence of Algorithm \ref{alg1} w.r.t. $w=(\phi,\theta)$. Let $w_t^{pr}=(\phi_t^{pr},\theta_t^{pr})$, $w_t^{ag}=(\phi_t^{ag},\theta_t^{ag})$ and $\tw_t^i=(\tphi_t^i,\ttheta_t^i)$. The step $9$ in Algorithm \ref{alg1} is then equivalent to the following:
\vspace{-0.1cm}
{\small
\begin{align}
    w_t=&\prox_{\eta_t H} (w_{t-1}-\eta_t \nabla_{w,t}),\\
    w_t^{ag}=&\prox_{\beta_t H}(w_t^{pr}-\beta_t \nabla_{w,t}),
    \vspace{-0.1cm}
\end{align}
}%
where $\nabla_{w,t}=\frac{\lambda}{N}\sum_{i=1}^N(w_t^{pr}-\tw_t^i)$.

We next characterize the structural properties of the federated meta-learning objective $F(w)$. For ease of exposition, let $G(w)=\frac{1}{N}\sum_{i=1}^N \min_{\tw^i} G_i(\tw^i;w)$.
As is standard, we make the following assumptions.
\begin{assumption}\label{assum1}
The loss function $L_i$ is twice-differentiable and $\rho$-smooth, i.e., $\|\nabla L_i(w)-\nabla L_i(w')\|\leq \rho\|w-w'\|$.
\end{assumption}
\begin{assumption}\label{assum2}
$H(\cdot)$ is a proper closed convex function, and $\|w\|\leq M$. This implies that $\|\prox_{cH}(w-cg)\|\leq M$ for any $c>0$ and $g$.
\end{assumption}

It can be shown that Assumption \ref{assum2} immediately holds for $\mathcal{L}_1$-norm and Group Lasso with bounded domain \cite{ghadimi2016accelerated}.
Following the same line as in \cite{zhou2019efficient}, we have the following lemma:
\begin{lemma}\label{lem:smooth}
Suppose that Assumption \ref{assum1} holds. For $\lambda>\rho$, $G(w)$ is $\frac{\lambda \rho}{\lambda+\rho}$-smooth w.r.t. $w$.
\end{lemma}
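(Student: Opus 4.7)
The plan is to exhibit $G$ as an average of Moreau-envelope-like objects and reduce the smoothness claim to a per-node Hessian computation. Define $G_i^*(w)=\min_{\tw^i} G_i(\tw^i;w)$ so that $G(w)=\frac{1}{N}\sum_{i=1}^N G_i^*(w)$; since averaging preserves smoothness constants, it suffices to show each $G_i^*$ is $\frac{\lambda\rho}{\lambda+\rho}$-smooth. Under Assumption~\ref{assum1} and $\lambda>\rho$, the inner map $\tw^i\mapsto G_i(\tw^i;w)=L_i(\tw^i)+\frac{\lambda}{2}\|\tw^i-w\|^2$ has Hessian $\nabla^2 L_i(\tw^i)+\lambda I\succeq(\lambda-\rho)I$, so it is $(\lambda-\rho)$-strongly convex in $\tw^i$. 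Hence the minimizer $\tw^{i*}(w)$ is unique, and by the implicit function theorem it depends smoothly on $w$. Danskin's (envelope) theorem then gives
\[
\nabla G_i^*(w)=\nabla_w G_i\bigl(\tw^{i*}(w);w\bigr)=\lambda\bigl(w-\tw^{i*}(w)\bigr),
\]
and combining with the optimality condition $\nabla L_i(\tw^{i*}(w))+\lambda(\tw^{i*}(w)-w)=0$ rewrites this as $\nabla G_i^*(w)=\nabla L_i(\tw^{i*}(w))$.

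Next I would differentiate the optimality condition implicitly,
\[
\nabla^2 L_i(\tw^{i*}(w))\,\partial_w \tw^{i*}(w)+\lambda\bigl(\partial_w \tw^{i*}(w)-I\bigr)=0,
\]
to obtain $\partial_w \tw^{i*}(w)=\lambda\bigl(\nabla^2 L_i(\tw^{i*}(w))+\lambda I\bigr)^{-1}$, which is well defined thanks to the strong-convexity lower bound. Plugging this into $\nabla^2 G_i^*(w)=\lambda\bigl(I-\partial_w \tw^{i*}(w)\bigr)$ yields the closed form
\[
\nabla^2 G_i^*(w)=\lambda\,\nabla^2 L_i(\tw^{i*}(w))\bigl(\nabla^2 L_i(\tw^{i*}(w))+\lambda I\bigr)^{-1}.
\]
Since $\nabla^2 L_i$ and $\nabla^2 L_i+\lambda I$ are simultaneously diagonalizable, each eigenvalue of $\nabla^2 G_i^*(w)$ takes the form $\frac{\lambda\mu}{\mu+\lambda}$ where $\mu$ is an eigenvalue of $\nabla^2 L_i(\tw^{i*}(w))$; by $\rho$-smoothness of $L_i$, $\mu\in[-\rho,\rho]$. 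The function $\mu\mapsto\frac{\lambda\mu}{\mu+\lambda}$ is monotone on $(-\lambda,\infty)$, so its magnitude on $[0,\rho]$ is maximized at $\mu=\rho$, giving $\|\nabla^2 G_i^*(w)\|\le\frac{\lambda\rho}{\lambda+\rho}$, hence $\frac{\lambda\rho}{\lambda+\rho}$-smoothness of $G_i^*$ and, by averaging, of $G$.

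The main obstacle is the eigenvalue accounting in the last step: the clean bound $\frac{\lambda\rho}{\lambda+\rho}$ follows immediately when $\nabla^2 L_i\succeq 0$ (the standard Moreau-envelope setting), but in the fully non-convex regime $\mu\in[-\rho,\rho]$ the expression $\frac{\lambda\mu}{\mu+\lambda}$ can in principle reach $\frac{\lambda\rho}{\lambda-\rho}$ at $\mu=-\rho$. Reconciling this with the statement requires either leveraging convexity of $L_i$ (as tacitly done in~\cite{zhou2019efficient}) or a tighter pairing argument on the negative-curvature directions that exploits the interaction with the proximal term; this is the delicate point I would want to import verbatim from~\cite{zhou2019efficient}. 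The other technical items---legitimacy of Danskin's formula, smoothness of $\tw^{i*}(\cdot)$ via the implicit function theorem, and invertibility of $\nabla^2 L_i+\lambda I$---are all immediate once strong convexity of the inner problem is in hand, so the analytical heart of the proof really is the spectral bound on $\nabla^2 G_i^*$.
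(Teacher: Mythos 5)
Your derivation is the standard Moreau-envelope argument, and since the paper offers no proof of this lemma at all (it simply defers to \cite{zhou2019efficient}), your reconstruction --- envelope theorem for $\nabla G_i^*$, implicit differentiation of the optimality condition, and the closed form $\nabla^2 G_i^*(w)=\lambda\,\nabla^2 L_i(\tw^{i*})\bigl(\nabla^2 L_i(\tw^{i*})+\lambda I\bigr)^{-1}$ --- is exactly the computation the statement rests on. The difficulty you flag at the end is real, and you should not expect a ``tighter pairing argument'' to close it: under Assumption~\ref{assum1} alone the eigenvalues $\mu$ of $\nabla^2 L_i$ range over $[-\rho,\rho]$, the map $\mu\mapsto\lambda\mu/(\lambda+\mu)$ is increasing on $(-\lambda,\infty)$, and so the spectrum of $\nabla^2 G_i^*$ lies in $\bigl[-\tfrac{\lambda\rho}{\lambda-\rho},\,\tfrac{\lambda\rho}{\lambda+\rho}\bigr]$. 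The gradient-Lipschitz constant one can certify is therefore $\tfrac{\lambda\rho}{\lambda-\rho}$, realized along the directions of most negative curvature; the constant $\tfrac{\lambda\rho}{\lambda+\rho}$ is the correct two-sided bound only when $L_i$ is convex. Read as ``$\|\nabla G(w)-\nabla G(w')\|\le\tfrac{\lambda\rho}{\lambda+\rho}\|w-w'\|$,'' the lemma does not follow from the stated assumptions, and no additional work in your final spectral step will produce it --- so the one step you leave open is genuinely unprovable as posed in the non-convex regime.

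What does survive, and what the paper actually uses downstream, is the one-sided bound $\nabla^2 G(w)\preceq\tfrac{\lambda\rho}{\lambda+\rho}I$, which your eigenvalue computation already establishes (the maximum of $\lambda\mu/(\lambda+\mu)$ over $[-\rho,\rho]$ is attained at $\mu=\rho$). The three smoothness inequalities \eqref{smooth1}--\eqref{smooth3} at the start of the proof of Theorem~\ref{theorem1} are all descent-lemma upper bounds of the form $G(y)\le G(x)+\langle\nabla G(x),y-x\rangle+\tfrac{\lambda\rho}{2(\lambda+\rho)}\|y-x\|^2$, and these require only the upper curvature estimate, not the full Lipschitz property. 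The clean repair is therefore to restate the lemma as the one-sided bound (or as $\tfrac{\lambda\rho}{\lambda-\rho}$-smoothness, noting that the convex case recovers $\tfrac{\lambda\rho}{\lambda+\rho}$) and observe that the convergence analysis only invokes the upper bound. With that amendment your proof is complete; as written, the inequality $\|\nabla^2 G_i^*(w)\|\le\tfrac{\lambda\rho}{\lambda+\rho}$ is the one step that cannot be justified.
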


Therefore, the federated meta-learning problem \eqref{metaobject} can be rewritten as 
\vspace{-0.1cm}
{\small
\begin{align}\label{rewrite}
    \min_w~~F(w)=H(w)+G(w)
    \vspace{-0.2cm}
\end{align}
}%
where $H(\cdot)$ is convex and non-smooth, and $G(\cdot)$ is non-convex and smooth.
We aim to establish the convergence of Algorithm \ref{alg1} for finding a first-order stationary point of $F(w)$, which is defined as follows.
\begin{definition}
$w$ is a first-order stationary point of $F$ if $0\in \partial H(w)+\nabla G(w)$, where $\partial H(\cdot)$ denotes the subdifferential of $H(\cdot)$.
\end{definition}

Let $\mathcal{Q}(w,g,c)=\frac{1}{c}[w-\prox_{cH}(w-cg)]$. When $g=\nabla G(w)$, $\mathcal{Q}(w,g,c)$ is generally called the gradient mapping at $w$ \cite{parikh2014proximal}. It is easy to tell that $\mathcal{Q}(w,\nabla G(w),c)=\nabla G(w)$ if $H(w)=0$. Hence, the value of $\mathcal{Q}(w,g,c)$ is often used as a termination criterion for solving non-smooth optimization problem as a surrogate of the subdifferentials, based on the following lemma \cite{gong2013general,attouch2013convergence}:
\begin{lemma}\label{lem:opti}
$w$ is a first-order stationary point of $F$ if and only if $\mathcal{Q}(w,g,c)=0$.
\end{lemma}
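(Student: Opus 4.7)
The plan is to reduce everything to the first-order optimality condition of the proximal operator. Since $H$ is a proper closed convex function (Assumption \ref{assum2}), the proximal operator is single-valued and, for any $v$, the point $u = \prox_{cH}(v)$ is characterized by the first-order optimality condition of $\min_x \{ H(x) + \frac{1}{2c}\|x-v\|^2 \}$, namely
\[
 0 \in \partial H(u) + \frac{1}{c}(u - v), \quad \text{i.e.,} \quad v - u \in c\,\partial H(u).
\]
I would take this as the single workhorse identity and apply it with $g = \nabla G(w)$, i.e., specializing $v = w - c\nabla G(w)$ as in the gradient mapping $\mathcal{Q}(w,\nabla G(w),c)$.

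For the direction ($\Leftarrow$), assume $\mathcal{Q}(w,\nabla G(w),c) = 0$. By definition this means $w = \prox_{cH}(w - c\nabla G(w))$, so $u = w$ in the optimality condition above. Substituting $v = w - c\nabla G(w)$ and $u = w$ yields $-c\nabla G(w) \in c\,\partial H(w)$, and dividing by $c>0$ gives $-\nabla G(w) \in \partial H(w)$, i.e., $0 \in \partial H(w) + \nabla G(w)$. Hence $w$ is a first-order stationary point of $F$.

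For the converse ($\Rightarrow$), assume $0 \in \partial H(w) + \nabla G(w)$, which rearranges to $-c\nabla G(w) \in c\,\partial H(w)$. Setting $v = w - c\nabla G(w)$ and $u = w$, this is exactly the optimality condition $v - u \in c\,\partial H(u)$ for $u = \prox_{cH}(v)$. Since $\prox_{cH}$ is single-valued under Assumption \ref{assum2}, we conclude $w = \prox_{cH}(w - c\nabla G(w))$, so $\mathcal{Q}(w,\nabla G(w),c) = \frac{1}{c}[w - \prox_{cH}(w - c\nabla G(w))] = 0$.

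There is no real obstacle: the whole argument is an unpacking of the subgradient optimality condition of the proximal subproblem, leveraging convexity of $H$ (so that $\partial H$ is the convex subdifferential and $\prox_{cH}$ is well-defined and single-valued) and smoothness of $G$ (so $\nabla G(w)$ is a classical gradient, consistent with its use in the Fermat condition $0 \in \partial H(w) + \nabla G(w)$). The only subtlety worth flagging is that the statement of the lemma writes $\mathcal{Q}(w,g,c)$ with a generic $g$; the intended reading — and the one used subsequently as a termination criterion — is $g = \nabla G(w)$, which I would make explicit at the start of the proof.
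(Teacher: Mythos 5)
Your proof is correct: the paper does not actually prove this lemma (it is stated as a known fact with citations to the proximal-gradient literature), and your argument is precisely the standard one those references use, namely unpacking the subgradient optimality condition $v-u\in c\,\partial H(u)$ characterizing $u=\prox_{cH}(v)$ and specializing to $v=w-c\nabla G(w)$, with uniqueness of the proximal point giving the converse direction. Your observation that the generic $g$ in the lemma statement must be read as $g=\nabla G(w)$ is also the right reading and worth making explicit, exactly as you do.
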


It has been shown in \cite{ghadimi2016accelerated} that $\|\mathcal{Q}(w,g,c)\|^2$ can be used to quantify the gap between $w$ and the first-order stationary point of $F$, in the sense that this optimality gap converges to $0$ as the value of $\|\mathcal{Q}(w,g,c)\|^2$ vanishes. Therefore, in this work we seek to establish the convergence of Algorithm \ref{alg1} in terms of an $\epsilon$-first order stationary point, i.e.,
$\|\mathcal{Q}(w,g,c)\|^2\leq\epsilon$.

As mentioned earlier, the global update direction $\nabla_{w,t}$ is indeed an approximate gradient w.r.t. to function $G$. To show the convergence, we first evaluate this gradient estimation gap, which is captured by the following lemma.
\begin{restatable}{lemma}{lemmaa}\label{lem:boundgap}
Suppose that Assumption \ref{assum1} holds. For $\lambda>\rho$, the following equality holds:
\begin{equation*}
    \nabla_{w,t}=\nabla_{w} G(w_t^{pr})+\delta_{t},
\end{equation*}
where $\|\delta_{t}\|^2\leq\frac{\lambda^2\xi_t}{(\lambda-\rho)^2}$.
\end{restatable}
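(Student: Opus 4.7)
The plan is to write $\delta_t$ explicitly as the discrepancy between the approximate minimizers $\tw_t^i$ returned by the nodes and the exact inner minimizers of $G_i(\cdot\,;w_t^{pr})$, and then bound it via strong convexity. Let $\tw_t^{i*} \triangleq \arg\min_{\tw^i} G_i(\tw^i; w_t^{pr})$. Since each $L_i$ is $\rho$-smooth by Assumption~\ref{assum1}, its Hessian satisfies $\nabla^2 L_i \succeq -\rho I$, so adding the quadratic regularizer $\frac{\lambda}{2}\|\tw^i - w_t^{pr}\|^2$ makes $G_i(\cdot\,;w_t^{pr})$ a $(\lambda-\rho)$-strongly convex and $(\lambda+\rho)$-smooth function of $\tw^i$ whenever $\lambda > \rho$; in particular $\tw_t^{i*}$ exists, is unique, and satisfies $\nabla_{\tw^i} G_i(\tw_t^{i*}; w_t^{pr}) = 0$.

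First, I would apply Danskin's (envelope) theorem to the inner minimization. Because $\tw_t^{i*}$ is the unique minimizer and $G_i$ is jointly smooth in its two arguments, the value function is differentiable and only the penalty term contributes to the $w$-derivative:
\begin{equation*}
\nabla_w \min_{\tw^i} G_i(\tw^i; w_t^{pr}) = \nabla_w G_i(\tw_t^{i*}; w_t^{pr}) = \lambda\bigl(w_t^{pr} - \tw_t^{i*}\bigr).
\end{equation*}
Averaging over $i$ gives $\nabla G(w_t^{pr}) = \frac{\lambda}{N}\sum_{i=1}^N (w_t^{pr} - \tw_t^{i*})$. Subtracting this from the definition $\nabla_{w,t} = \frac{\lambda}{N}\sum_{i=1}^N (w_t^{pr} - \tw_t^i)$ identifies the error term as $\delta_t = \frac{\lambda}{N}\sum_{i=1}^N (\tw_t^{i*} - \tw_t^i)$, establishing the claimed decomposition.

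Next, I would bound each summand using strong convexity. Since $\nabla G_i(\tw_t^{i*}; w_t^{pr}) = 0$, $(\lambda-\rho)$-strong convexity combined with Cauchy--Schwarz gives the standard inequality $\|\nabla G_i(\tw_t^i; w_t^{pr})\| \geq (\lambda - \rho)\|\tw_t^i - \tw_t^{i*}\|$. Together with the algorithm's termination criterion $\|\nabla G_i(\tw_t^i; w_t^{pr})\|^2 \leq \xi_t$, this yields $\|\tw_t^i - \tw_t^{i*}\|^2 \leq \xi_t/(\lambda-\rho)^2$. Applying Jensen's inequality to the averaged sum then gives
\begin{equation*}
\|\delta_t\|^2 \;=\; \frac{\lambda^2}{N^2}\Bigl\|\sum_{i=1}^N (\tw_t^{i*} - \tw_t^i)\Bigr\|^2 \;\leq\; \frac{\lambda^2}{N}\sum_{i=1}^N \|\tw_t^{i*} - \tw_t^i\|^2 \;\leq\; \frac{\lambda^2 \xi_t}{(\lambda-\rho)^2},
\end{equation*}
which is precisely the stated bound. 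The only mildly subtle ingredient is the envelope-theorem step, but differentiability of the value function is guaranteed by uniqueness of $\tw_t^{i*}$ together with joint smoothness of $G_i$; everything else is routine manipulation of strong convexity and the termination criterion, so I do not anticipate any real obstacle.
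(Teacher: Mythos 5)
Your proposal is correct and follows essentially the same route as the paper's proof: identify $\delta_t=\frac{\lambda}{N}\sum_{i}(\tw_t^{i*}-\tw_t^i)$ by showing $\nabla_w G(w_t^{pr})=\frac{\lambda}{N}\sum_i(w_t^{pr}-\tw_t^{i*})$, then bound each term via $(\lambda-\rho)$-strong convexity of $G_i$, the termination criterion $\|\nabla G_i(\tw_t^i)\|^2\leq\xi_t$, and Jensen's inequality. The only cosmetic difference is that you invoke the envelope theorem where the paper carries out the equivalent chain-rule computation with the Jacobian of $\tw_t^{i*}$ and the first-order optimality condition.
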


For convenience, denote an auxiliary sequence
{\small
\begin{align*}
    \Gamma_t=\begin{cases}
    1,~ &t=1,\\
    (1-\alpha_t)\Gamma_{t-1},~ &t\geq 2.
    \end{cases}
\end{align*}
}%
Let $w^*$ be the optimal solution to problem \eqref{rewrite}, i.e., $F(w^*)=\min_w F(w)$. 
We can have the following main theorem about the convergence of Algorithm \ref{alg1}.

\begin{restatable}{theorem}{theorema}\label{theorem1}
Suppose that Assumptions \ref{assum1} and \ref{assum2} hold. For any $t\geq 1$, let $\alpha_t=\frac{2}{t+1}$, $\beta_t<\frac{\lambda+\rho}{\lambda \rho}$ and $\eta_t$ satisfy:
{\small
\begin{align*}
    \alpha_t\eta_t\leq \beta_t, ~  \frac{\alpha_t}{\Gamma_t}\left(\frac{1}{\eta_t}-1\right)\geq \frac{\alpha_{t+1}}{\Gamma_{t+1}}\left(\frac{1}{\eta_{t+1}}-1\right).
\end{align*}
}%
Then, for $\lambda>\rho$, we have
{\small
\begin{align*}
    \min_{t\in[1,T]}& \|\mathcal{Q}(w_t^{pr},\nabla_w G(w_t^{pr}),\beta_t)\|^2
    \leq\frac{24\lambda^3 \rho\sum_{t=1}^T
   \xi_t}{T(\lambda+\rho)(\lambda-\rho)^2}\\
    &+\frac{24(\lambda \rho)^2(\|w^*\|^2+2M)}{T(\lambda+\rho)^2}
    +\frac{C\|w_0-w^*\|^2}{T^2(T+1)}
\end{align*}
}%
where $C=\frac{24\lambda \rho}{(\lambda+\rho)}\left(\frac{1}{\eta_1}-1\right)$.
\end{restatable}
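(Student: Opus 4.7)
The approach is to recognize that Algorithm 1 is an inexact accelerated proximal gradient method applied to the composite problem \eqref{rewrite}, where $G$ is non-convex but $\frac{\lambda\rho}{\lambda+\rho}$-smooth by Lemma \ref{lem:smooth}, $H$ is convex and (possibly) non-smooth, and the only gradient of $G$ we access is the approximate direction $\nabla_{w,t} = \nabla G(w_t^{pr}) + \delta_t$ characterized by Lemma \ref{lem:boundgap}. The plan is to follow the three-sequence analysis of Ghadimi--Lan for accelerated gradient methods on non-convex composite objectives, using the Lyapunov function built from the auxiliary sequence $\Gamma_t$, and then to absorb the inexact-gradient error $\delta_t$ into the final bound using $\|\delta_t\|^2 \le \lambda^2\xi_t/(\lambda-\rho)^2$.

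Concretely, the first step is to apply the smoothness descent lemma for $G$ at the pair $(w_t^{pr}, w_t^{ag})$, which yields
\begin{equation*}
G(w_t^{ag}) \le G(w_t^{pr}) + \langle \nabla G(w_t^{pr}), w_t^{ag}-w_t^{pr}\rangle + \tfrac{\lambda\rho}{2(\lambda+\rho)}\|w_t^{ag}-w_t^{pr}\|^2.
\end{equation*}
Rewriting $\nabla G(w_t^{pr}) = \nabla_{w,t} - \delta_t$ and using the defining optimality condition of the proximal step $w_t^{ag} = \prox_{\beta_t H}(w_t^{pr}-\beta_t \nabla_{w,t})$ together with convexity of $H$, one obtains a one-step inequality that upper bounds $F(w_t^{ag})$ by a convex combination involving $F(w_{t-1}^{ag})$, $F(w^*)$, an inner-product error term in $\delta_t$, and a negative $-\beta_t\|\mathcal{Q}(w_t^{pr}, \nabla_w G(w_t^{pr}), \beta_t)\|^2/2$ term arising from the definition of the gradient mapping. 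This is the standard place where the gradient mapping materializes as the measure of stationarity. A parallel argument comparing $w_t$ against $w^*$ (using the second proximal update in step $9$) supplies the Bregman-like quantity needed to telescope.

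The next step is to divide by $\Gamma_t$ and telescope; the two imposed relations $\alpha_t\eta_t\le\beta_t$ and $\frac{\alpha_t}{\Gamma_t}(\frac{1}{\eta_t}-1)\ge \frac{\alpha_{t+1}}{\Gamma_{t+1}}(\frac{1}{\eta_{t+1}}-1)$, together with $\beta_t<\frac{\lambda+\rho}{\lambda\rho}$, are exactly what is needed to guarantee that the collected coefficients of the $F(w_t^{ag})-F(w^*)$ terms remain non-negative and that the $\|w_t - w^*\|^2$ terms collapse into a single initial-distance term. Taking the minimum over $t\in[1,T]$ of $\|\mathcal{Q}(w_t^{pr},\nabla_w G(w_t^{pr}),\beta_t)\|^2$ on the left and using $\alpha_t = 2/(t+1)$ (so that $\Gamma_t = 2/(t(t+1))$ and $\sum_t 1/\Gamma_t = \Theta(T^3)$) produces the three terms in the stated bound: the $\|w_0-w^*\|^2/(T^2(T+1))$ contribution is the accelerated part from the Lyapunov initialization, the $(\lambda\rho)^2(\|w^*\|^2+2M)/(T(\lambda+\rho)^2)$ piece comes from bounding $F(w^*)-F(w_T^{ag})$ in the non-convex case using Assumption \ref{assum2}, and the $\lambda^3\rho\sum_t \xi_t/(T(\lambda+\rho)(\lambda-\rho)^2)$ piece collects the inexact-gradient noise through Lemma \ref{lem:boundgap}, by applying Young's inequality to every $\langle \delta_t,\cdot\rangle$ term.

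The main obstacle will be the non-convexity of $G$: unlike in the convex accelerated setting where one bounds $F(w_t^{ag})-F(w^*)$ directly at the $O(1/T^2)$ rate, here one has to route the argument through the gradient mapping, which requires carefully choosing which sequence plays the role of the ``stationarity anchor'' at each step and showing that the negative gradient-mapping term is not destroyed by the $\delta_t$ cross-terms. A subtler technical point is that the proximal step for $\theta$ is vacuous since $H$ does not depend on $\theta$, so I must justify at the outset (as the paper observes before the theorem) that writing the $\theta$-update as a proximal step is harmless, making the unified $w$-analysis legitimate. Once these are handled, the parameter inequalities plug in cleanly and the three-term bound follows.
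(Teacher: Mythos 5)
Your proposal follows essentially the same route as the paper's proof: the Ghadimi--Lan three-point descent-lemma analysis for the inexact accelerated proximal step, convexity of $H$ to handle the two prox updates, telescoping via $\Gamma_t$ under the stated stepsize conditions, and Young's inequality to absorb the $\langle\delta_t,\cdot\rangle$ cross-terms through Lemma \ref{lem:boundgap}. One small correction to your accounting: the $(\|w^*\|^2+2M)$ term does not come from bounding $F(w^*)-F(w_T^{ag})$ (that difference is non-positive and is simply dropped); it arises from the quadratic remainders $\frac{\lambda\rho}{2(\lambda+\rho)}\|w^*-w_t^{pr}\|^2$ and $\frac{\lambda\rho}{2(\lambda+\rho)}\|w_{t-1}^{ag}-w_t^{pr}\|^2$ in the smoothness upper bounds, which cannot be cancelled without convexity of $G$ and are instead controlled via Assumption \ref{assum2}.
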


It can be seen from Theorem \ref{theorem1} that the convergence error consists of three terms, where the first term captures the impact of inexact solutions to each local learning problem \eqref{localmin}, and the last two terms quickly converge to $0$ in the faster rate of $O(\frac{1}{T})$, compared with the rate of $O(\frac{1}{\sqrt{T}})$ in \cite{zhou2019efficient}. Let $\psi=\frac{24\lambda^3 \rho\sum_{t=1}^T \xi_t}{T(\lambda+\rho)(\lambda-\rho)^2}$. Clearly, if the accumulated estimation error $\sum_{t=1}^T \xi_t=o(T)$ for local learning problem \eqref{localmin}, $\psi$ will vanish eventually. Therefore, Algorithm \ref{alg1} can find an $O(\epsilon+\psi)$-first order stationary point in at most $O(\epsilon^{-1})$ communication rounds between the cloud and nodes.

% Let $\eta=\frac{1}{\sqrt{T}}$ and $M_t=\log T$. It can be seen from Theorem \ref{thm1} that the Algorithm \ref{alg1} will eventually convergence to a first-order stationary point, i.e., $\nabla F(\cdot)=0$.

\section{Experiments}

To evaluate the performance of the proposed framework MetaGater, we seek to answer the following questions in this section: (1) What is the performance of the proposed federated meta-learning approach when compared with other federated meta-learning methods? (2) What is the impact of the channel gating modules on the fast adaptation performance at the target node? (3) What is the performance of MetaGater when compared with other fast pruning methods?

\subsection{Experimental setup}

\textbf{Datasets.} In the experiments, we first study the  image classification problem as the learning tasks on two widely used datasets, MNIST \cite{lecun1998gradient} and CIFAR-10 \cite{krizhevsky2009learning}. In particular, for MNIST, we distribute the data among $N=20$ tasks, such that the number of samples per task is in the range of [1165, 3834]. We randomly select 5 tasks
for each round in federated meta-training, and 5 target tasks for fast adaptation. For CIFAR-10, the data is distributed among $N=50$ tasks where the number of samples per task is in the range of [221, 2792]. Similarly, we randomly select 20 tasks
for each round in federated meta-training, and 20 target tasks for fast adaptation.
Also, each task only has data samples from two classes for both datasets. (Due to the space limitation, we relegate more experimental results to the appendix.)

% we distribute the data among $N=20$ tasks, such that 1) each task only has data samples from two classes and 2) the number of samples per task ... We randomly select ? tasks as the training tasks and ? tasks as the target tasks. For the target tasks, we divide the local dataset into a training set and a testing set...

\textbf{Models.} For MNIST, we consider a  two-layer neural network with a hidden layer of size 100.  For CIFAR-10, we consider a four-layer convolutional neural network with three convolutional layers, followed by a fully connected layer. The layer sizes are 32, 64, 128 and 2048, respectively. To minimize the introduced model overhead, we integrate the channel gating module at  the third convolutional layer, where Group Lasso is used to prompt sparsity of the channel gating module. It is worth noting that the same channel gating technique can be applied to other layers along the same line.

\textbf{Parameter setup.} As is standard in federated learning \cite{mcmahan2017communication}, 
we evaluate the performance under a fixed number of communication rounds, with $T=800$ and $T=400$ for MINIST and CIFAR-10, respectively. Other hyper-parameters are same for both datasets. For $t$-th round,
the learning rate $\alpha_t=\frac{2}{t+1}$, and we choose $\beta_t=\alpha_t\eta_t=1$. Besides, $\lambda=0.2$. During federated meta-learning, we run gradient descent for multiple local updates to solve the local minimization problem \eqref{localmin} for each training tasks.  For fast adaptation at target tasks, we only run one-step gradient descent to fine-tune both meta-backbone network and meta-gating module. We 
evaluate the testing accuracy at the target tasks,
and repeat all the experiments for 5 times to obtain the average performance.

\subsection{Meta-models via federated meta-learning}

To answer the first question, we consider two existing baseline algorithms, i.e., the classical federated learning algorithm FedAvg \cite{mcmahan2017communication} and one state-of-the-art federated meta-learning approach Per-FedAvg \cite{fallah2020personalized}. For a fair comparison, we first remove the channel gating module, consider meta-learning of the backbone network, and also update the output of FedAvg with one-step gradient descent as in Per-FedAvg for testing at the target task. Note that we do not use pFedMe \cite{t2020personalized} as a baseline for the following reasons: (1) The performance of the global model therein is worse than Per-FedAvg; (2) although the personalized models can achieve better accuracy, it requires to test all personalized models and pick the best one, which is clearly not suitable for on-device learning.

As illustrated in Fig.~\ref{Fig:convergence}, MetaGater clearly converges faster than FedAvg and Per-FedAvg, which is  important in federated learning as the communication cost usually is a bottleneck in wireless networks. Moreover, we  have the following observations based on Table \ref{tab:fedmeta}:
(1) MetaGater achieves the best accuracy performance among all the methods. (2)
For learning the meta-backbone network only, MetaGater achieves a much higher testing accuracy with less training time compared with Per-FedAvg, even with only one-step gradient update locally during the training process. Note that the longer training time for Per-FedAvg is because it need to evaluate the local gradient for two times in one local update, in order to approximate the gradient w.r.t. the meta-model.
Such a performance improvement firmly corroborates the benefits of utilizing higher-order information of the meta-objective function through proximal updates and accelerating the global meta-model updates with momentum. (3) As expected, it takes longer to  jointly train the meta-backbone network and the meta-gating module for MetaGater with gating, compared to training MetaGater without  gating. This extra training time is nevertheless not important from an offline training perspective. More importantly,  the \emph{structural} sparsity offered by the gating module can help to  achieve the agile adaptability at different new tasks.

\begin{figure}
\centering
\includegraphics[scale=0.13]{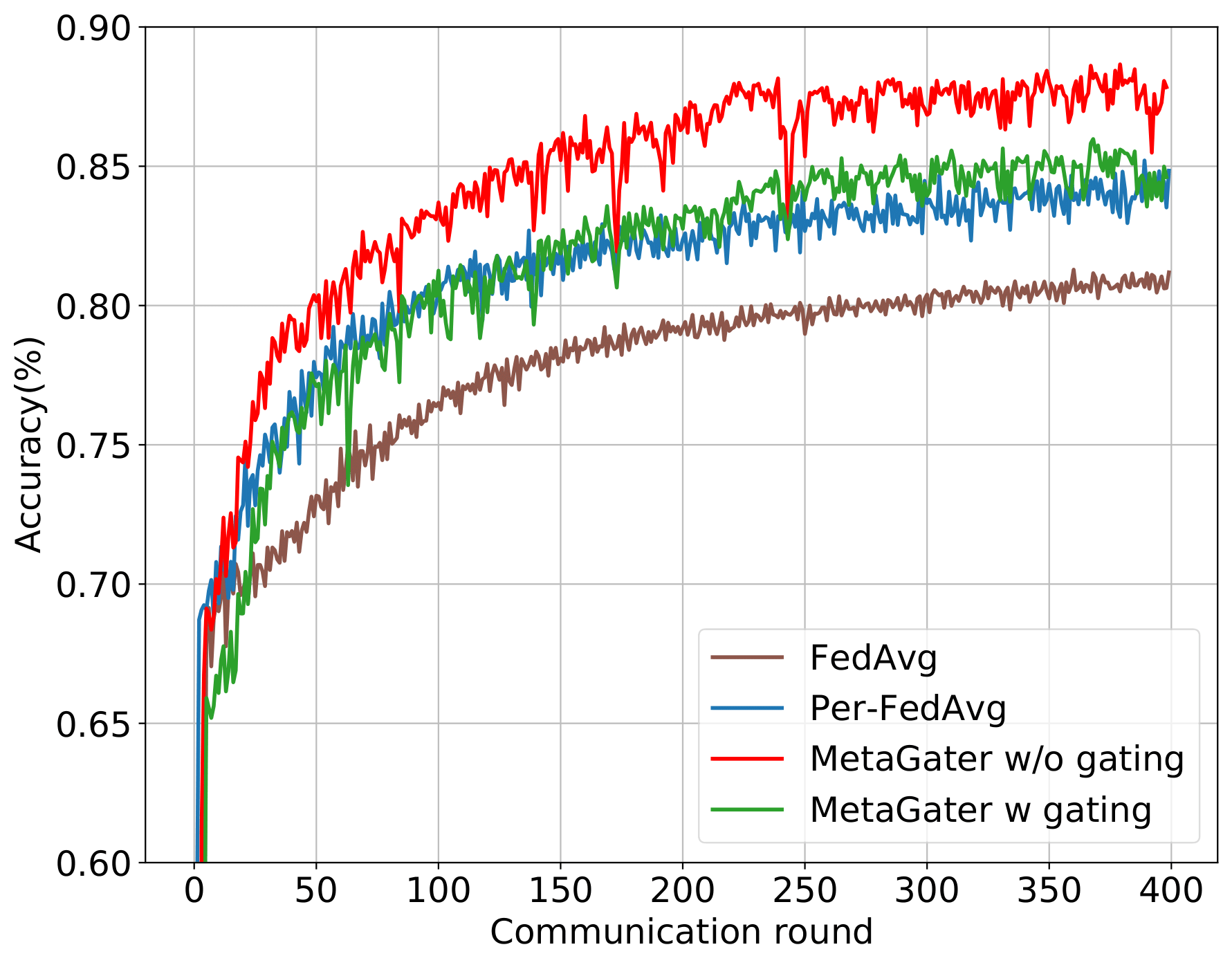}
\caption{Convergence behavior of MetaGater, Per-FedAvg, FedAvg  on CIFAR-10. We compare testing accuracy for target tasks.}
\label{Fig:convergence}
\end{figure}

\begin{table}
\small
    \begin{center}
    \begin{tabular}{c|cccc}
         \toprule
         Dataset &
          Method &
          \begin{tabular}[c]{@{}l@{}}
               Local  \\
               update 
          \end{tabular} & Accuracy(\%) & \begin{tabular}[c]{@{}l@{}}
               Training  \\
               time(s)
          \end{tabular}\\
          \hline
          \multirow{6}{*}{MNIST}&
         \multirow{2}{*}{FedAvg}
            & 1 & $98.78\pm0.03$ & 396\\
          &  & 2 & - & -\\
         \cline{2-5}
         &
         \multirow{2}{*}{Per-FedAvg}
            & 1 & $99.05\pm0.05$ & 920\\
         &   & 2 & - & -\\
        \cline{2-5}
        &
         \multirow{2}{*}{
         \begin{tabular}[c]{@{}l@{}}
               MetaGater  \\
               w/o gating
          \end{tabular}}
            & 1 & $99.2\pm0.05$ & 763\\
          &  & 2 & $99.4\pm0.03$ & 1417\\
            \hline
            \multirow{6}{*}{CIFAR10}&
         \multirow{2}{*}{FedAvg}
            & 1 & $80.9\pm1.2$ & 1139\\
          &  & 2 & - & -\\
         \cline{2-5}
         &
         \multirow{2}{*}{Per-FedAvg}
            & 1 & $82.5\pm2.3$ & 2056\\
         &   & 2 & - & -\\
        \cline{2-5}
        &
         \multirow{2}{*}{
         \begin{tabular}[c]{@{}l@{}}
               MetaGater  \\
               w/o gating
          \end{tabular}}
            & 1 & $88.1\pm2.0$ & 1186\\
          &  & 2 & $88.6\pm2.5$ & 2046\\
                    \cline{2-5}
         &
          \multirow{2}{*}{
          \begin{tabular}[c]{@{}l@{}}
               MetaGater  \\
               w/ gating
           \end{tabular}}
             & 1 & $87.5\pm2.6$ & 1960\\
           &  & 2 & $88.2\pm2.8$ & 3719\\
            \hline
    \end{tabular}
    \end{center}
    \caption{Accuracy comparison for MetaGater, Per-FedAvg, FedAvg on MNIST and CIFAR-10. Clearly, MetaGater achieves the best accuracy among all methods. }
    \label{tab:fedmeta}
    \vspace{-0.5cm}
\end{table}

%{\bf [Sen: in both the paragraph above and the one below on impact of channel gating module, you need to comment on extra training time required for traing the gating module; this is the cost you pay to achieve better performance. ]} 
\subsection{Impact of channel gating module}

To highlight the impact of channel gating module, we compare the fast adaptation performance at the target tasks between (1) MetaGater without (w/o) channel gating module and (2) MetaGater with (w/) channel gating module. 
It can be seen from Table \ref{tab:fedmeta} that at the cost of  extra training time  to jointly meta-train the backbone network and the gating module, MetaGater can achieve the agile adaptability at different target tasks. In particular, 
for the fast adaptation performance shown in Table~\ref{tab:gating}, with channel gating module, the target task is able to quickly obtain a more compact model for efficient inference with only a slightly degradation in the accuracy, compared with MetaGater w/o gating module. 

\begin{table}
\small
    \begin{center}
    \begin{tabular}{cccc}
         \toprule
          Method &
          Accuracy(\%) & Sparsity(\%) & Learning time(s)\\
          \midrule
        MetaGater & \multirow{2}{*}{$88.1\pm2.0$} &\multirow{2}{*}{-}
        &\multirow{2}{*}{1.2}\\
        w/o gating & & \\
        \midrule
         MetaSNIP
            & $86.8\pm3.3$ & $25\pm1.2$ & 1.4\\
           % & 0.0\% & 0.0\% & 0\\
        \midrule
          MetaGater & \multirow{2}{*}{$87.5\pm2.6$} &\multirow{2}{*}{$25\pm3.2$}
        &\multirow{2}{*}{1.03}\\
        w/ gating & & \\
            \bottomrule
    \end{tabular}
    \end{center}
    \caption{Fast adaptation performance comparison on CIFAR-10. The sparsity represents the \emph{channel-wise sparsity} in the third convolutional layer. 
   % The $25\%$ indicates the sparsity of the selected subset before fine-tuning, and the range reflects the sparsity changing after fast adaptation. 
    Compared with MetaSNIP, MetaGater has a better accuracy and a larger sparsity range. The learning time is the total time for fast adaptation and inference at the target tasks.}
    \label{tab:gating}
     \vspace{-0.5cm}
\end{table}

\subsection{Performance of MetaGater}

Since this study focuses on the fast learning performance based on distributed learning, most existing methods based on centralized pre-training on a large target dataset cannot directly serve as a baseline without nontrivial modification. Note that SNIP \cite{lee2018snip} is a fast pruning method which directly pruns the initial network at single-shot, and hence eliminates the need of pre-training.
To fairly evaluate the performance of MetaGater, we compare  MetaGater with the following approach (referred as MetaSNIP): we train a meta-backbone network using MetaGater without gating module, apply SNIP to quickly obtain a sparse backbone network, and then fine-tune it using one-step gradient descent.

It can be seen from Table \ref{tab:gating} that MetaGater clearly outperforms MetaSNIP in the following sense:
(1) MetaGater is able to obtain a better subnet with higher accuracy, in a similar speed with MetaSNIP; (2) After fast adaptation, MetaGater exhibits a larger diversity of the achieved model sparsity on different target tasks. This larger diversity implies a better sensitivity of the learnt meta-gating module w.r.t. one-step gradient descent, which enables the fast learning of the task-specific channel gated network.

\section{Conclusion}

In this work, we  propose MetaGater, a fast learning framework of conditional channel gated networks via federated meta-learning, where good meta-initializations for both backbone networks and gating modules are jointly learnt by leveraging the model similarity across learning tasks on different nodes. As the meta-gating module effectively captures the important filters of a good meta-backbone network and offers structural sparsity across tasks, it can achieve the agile adaptability at different new tasks by quickly learning a task-specific conditional channel gated network. Particularly, to efficiently solve the federated meta-learning problem, we further propose a novel approach based on accelerated proximal gradient descent with inexact solutions to the local problem, and show that an $\epsilon$-first order stationary point can be obtained in at most $O(\epsilon^{-1})$ communication rounds for non-convex functions. Experiments showcase that our federated meta-learning approach clearly outperforms other existing methods, and MetaGater quickly returns an task-specific channel gated networks after fast adaptation, with a higher accuracy than one state-of-the-art method in fast pruning.

{\small
\bibliographystyle{ieee_fullname}
\bibliography{egbib}
}

\onecolumn
\section*{Appendix}
\appendix

\section{Auxiliary Lemmas}

To prove Theorem \ref{theorem1}, we first restate some known results to be used later.

\begin{lemma}\label{lemmaaux3}\cite{zhou2019efficient}
Assume that $g(u)$ is $\lambda$-strongly convex. Then we have
\begin{align*}
    \langle \nabla g(u), u-u^*\rangle\geq \lambda \|u-u^*\|^2
\end{align*}
where $u^*=\arg\min g(u)$.
\end{lemma}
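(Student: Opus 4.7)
The plan is to derive the inequality directly from the definition of $\lambda$-strong convexity by applying it twice (once at each of the points $u$ and $u^*$) and adding. Recall that $g$ being $\lambda$-strongly convex and differentiable means that for every pair of points $x,y$,
\begin{equation*}
    g(y) \geq g(x) + \langle \nabla g(x), y - x\rangle + \tfrac{\lambda}{2}\|y-x\|^2.
\end{equation*}
I would instantiate this inequality in two complementary ways. First, with $x = u$ and $y = u^*$, to obtain a lower bound on $g(u^*)$ involving $\langle \nabla g(u), u^*-u\rangle$. Second, with $x = u^*$ and $y = u$, to obtain a lower bound on $g(u)$ involving $\langle \nabla g(u^*), u-u^*\rangle$.

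The next step is to use the optimality of $u^*$: since $u^* = \arg\min g(u)$ and $g$ is differentiable (and unconstrained), the first-order optimality condition gives $\nabla g(u^*) = 0$, so the inner-product term in the second inequality vanishes. Adding the two resulting inequalities then causes the $g(u)$ and $g(u^*)$ terms to cancel on both sides, leaving
\begin{equation*}
    0 \geq \langle \nabla g(u), u^* - u\rangle + \lambda \|u - u^*\|^2,
\end{equation*}
which is exactly the claim after rearranging signs.

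An equivalent one-line route I would mention as a cross-check is to invoke the well-known fact that $\lambda$-strong convexity of $g$ is equivalent to $\lambda$-strong monotonicity of $\nabla g$, i.e.\ $\langle \nabla g(u) - \nabla g(v), u - v\rangle \geq \lambda \|u-v\|^2$ for all $u,v$; setting $v = u^*$ and using $\nabla g(u^*)=0$ immediately yields the lemma. There is no substantive obstacle here: the lemma is a textbook consequence of strong convexity together with first-order optimality, and the only care needed is to ensure $g$ is differentiable so that $\nabla g(u^*) = 0$ is well-defined, which is implicit in the statement through the use of $\nabla g(u)$.
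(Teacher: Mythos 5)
Your proof is correct: both routes (adding the two strong-convexity inequalities at $u$ and $u^*$, or invoking strong monotonicity of $\nabla g$ with $\nabla g(u^*)=0$) are the standard textbook argument, and your attention to differentiability and unconstrained first-order optimality is exactly the care needed. The paper itself gives no proof of this lemma --- it is restated as a known auxiliary result cited from the literature --- so there is no in-paper argument to compare against; your derivation fills that gap correctly.
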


\begin{lemma}\label{lemmaaux1}\cite{ghadimi2016accelerated}
For a sequence of learning rates $\{\alpha_t\}$ and the sequence $\{\gamma_t\}$ satisfies
\begin{align*}
    \gamma_t\leq (1-\alpha_t)\gamma_{t-1}+\tau_t,~t=1, 2, ...,
\end{align*}
 we have that $\gamma_t\leq \Gamma_t\sum_{i=1}^t(\tau_i/\Gamma_i)$ for any $t\geq 1$
 where
\begin{align*}
    \Gamma_t=\begin{cases}
    1, & t=1,\\
    (1-\alpha_t)\Gamma_{t-1}, & t\geq 2.
    \end{cases}
\end{align*}

\end{lemma}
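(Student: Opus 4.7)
The plan is a two-line rescaling-and-telescope argument. Because $\{\Gamma_t\}$ is constructed so that $\Gamma_t/\Gamma_{t-1} = 1-\alpha_t$ for every $t\geq 2$, dividing the hypothesized recursion $\gamma_t\leq (1-\alpha_t)\gamma_{t-1}+\tau_t$ by the positive quantity $\Gamma_t$ yields the linearized inequality
\[
\frac{\gamma_t}{\Gamma_t} \;\leq\; \frac{\gamma_{t-1}}{\Gamma_{t-1}} + \frac{\tau_t}{\Gamma_t}.
\]

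Next I would telescope this bound from $i=2$ up to $t$, obtaining
\[
\frac{\gamma_t}{\Gamma_t} \;\leq\; \frac{\gamma_1}{\Gamma_1} + \sum_{i=2}^{t}\frac{\tau_i}{\Gamma_i}.
\]
Under the base-case convention adopted throughout the paper, namely $\alpha_1=1$ as fixed in Theorem \ref{theorem1}, the $t=1$ instance of the hypothesized recursion collapses to $\gamma_1\leq \tau_1$; since $\Gamma_1=1$, this is exactly the $i=1$ term on the right-hand side. Folding it into the sum and multiplying both sides by $\Gamma_t$ gives $\gamma_t \leq \Gamma_t\sum_{i=1}^{t}\tau_i/\Gamma_i$, which is the claim.

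Equivalently, the statement admits a direct induction on $t$: the base case is $\gamma_1\leq\tau_1=\Gamma_1\cdot\tau_1/\Gamma_1$, and in the inductive step one substitutes $\gamma_{t-1}\leq \Gamma_{t-1}\sum_{i=1}^{t-1}\tau_i/\Gamma_i$ into the recursion and invokes $(1-\alpha_t)\Gamma_{t-1}=\Gamma_t$ to fuse the two terms. There is no genuine analytical obstacle here because the sequence $\{\Gamma_t\}$ is engineered precisely to linearize the affine recursion; the only minor bookkeeping point is the handling of the base case, which is settled by the convention $\alpha_1=1$ (or, equivalently, by treating the $t=1$ inequality $\gamma_1\leq\tau_1$ as an initial condition).
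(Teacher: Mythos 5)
Your proof is correct. The paper itself offers no proof of this lemma---it is restated verbatim from Ghadimi and Lan and used as a black box---and your argument (dividing the recursion by $\Gamma_t$ so that $\Gamma_t/\Gamma_{t-1}=1-\alpha_t$ linearizes it, then telescoping, or equivalently inducting) is precisely the standard one-line derivation from that reference. You also correctly flag the one genuinely delicate point, the $t=1$ base case, and resolve it via the convention $\alpha_1=1$ (which holds for the paper's choice $\alpha_t=\tfrac{2}{t+1}$ and which Ghadimi--Lan assume explicitly); the only unstated hypothesis you rely on is $\alpha_t\in(0,1)$ for $t\geq 2$ so that $\Gamma_t>0$, which again is implicit in the paper's parameter choices.
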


The next lemma characterizes the solution of the proximal updates in step 9 of Algorithm \ref{alg1}.

\begin{lemma}\label{lemmaaux2}\cite{ghadimi2012optimal}
 Let $h: X\rightarrow \mathbb{R}$ be a differentiable convex function and $V(x,z)$ be defined as 
\begin{align*}
    V(x,z)=h(z)-[h(x)+\langle \nabla h(x), z-x \rangle].
\end{align*}
For a given convex function $p: X\rightarrow \mathbb{R}$, the points $\Tilde{x},\Tilde{y}\in X$ and the scalars $\mu_1,\mu_2\geq 0$,
if
\begin{align*}
    u^*\in \arg\min \{p(u)+\mu_1 V(\Tilde{x},u)+\mu_2 V(\Tilde{y},u): u\in X\},
\end{align*}
then for any $u\in X$, we have
\begin{align*}
    p(u^*)+\mu_1 V(\Tilde{x},u^*)+\mu_2 V(\Tilde{y},u^*)\leq p(u)+\mu_1 V(\Tilde{x},u)+\mu_2 V(\Tilde{y},u)-(\mu_1+\mu_2)V(u^*,u).
\end{align*}
\end{lemma}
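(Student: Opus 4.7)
The plan is to prove the three-point inequality directly by combining (i) the first-order optimality condition satisfied by $u^*$ on the constrained minimization defining it, and (ii) the classical three-point identity for Bregman divergences (which is what $V(x,z)$ is, given that $h$ is differentiable and convex).

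First I would record the optimality condition. Since $p$ is convex (but not necessarily smooth) and $V(\tilde{x},\cdot)$, $V(\tilde{y},\cdot)$ are convex and differentiable in their second argument with gradients $\nabla h(\cdot)-\nabla h(\tilde{x})$ and $\nabla h(\cdot)-\nabla h(\tilde{y})$ respectively, the overall objective $\Phi(u)=p(u)+\mu_1 V(\tilde{x},u)+\mu_2 V(\tilde{y},u)$ is convex. Hence a minimizer $u^*\in X$ satisfies the variational inequality: there exists $g\in\partial p(u^*)$ with
\begin{equation*}
\langle g+\mu_1(\nabla h(u^*)-\nabla h(\tilde{x}))+\mu_2(\nabla h(u^*)-\nabla h(\tilde{y})),\,u-u^*\rangle\geq 0,\qquad \forall u\in X.
\end{equation*}
Using the subgradient inequality $p(u)-p(u^*)\geq \langle g, u-u^*\rangle$, this rearranges into
\begin{equation*}
p(u)-p(u^*)+\mu_1\langle \nabla h(u^*)-\nabla h(\tilde{x}),u-u^*\rangle+\mu_2\langle \nabla h(u^*)-\nabla h(\tilde{y}),u-u^*\rangle\geq 0.
\end{equation*}

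Second, I would verify the three-point identity. Expanding the definitions of $V(\tilde{x},u)$, $V(\tilde{x},u^*)$ and $V(u^*,u)$ and cancelling the $h(u)$, $h(u^*)$, $h(\tilde{x})$ terms yields
\begin{equation*}
V(\tilde{x},u)-V(\tilde{x},u^*)-V(u^*,u)=\langle \nabla h(u^*)-\nabla h(\tilde{x}),u-u^*\rangle,
\end{equation*}
and symmetrically with $\tilde{y}$ in place of $\tilde{x}$. Substituting these two identities into the optimality inequality above converts each inner-product term into a difference of Bregman divergences, producing
\begin{equation*}
p(u)-p(u^*)+\mu_1[V(\tilde{x},u)-V(\tilde{x},u^*)-V(u^*,u)]+\mu_2[V(\tilde{y},u)-V(\tilde{y},u^*)-V(u^*,u)]\geq 0,
\end{equation*}
which rearranges exactly to the claim.

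The main obstacle will be handling the non-smoothness of $p$ together with the constraint $u\in X$ carefully; strictly speaking I would state the optimality condition as a variational inequality involving a single subgradient $g\in\partial p(u^*)$ rather than informally writing $0\in\partial\Phi(u^*)$, so that the algebraic rearrangement with the three-point identity goes through without any differentiability assumption on $p$ and without requiring $u^*$ to be interior to $X$. Other than this bookkeeping, the argument is essentially an algebraic identity once the three-point formula is in hand, and no further analytic estimates are needed.
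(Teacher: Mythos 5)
Your proof is correct. Note that the paper itself does not prove this lemma at all---it imports it verbatim by citation from \cite{ghadimi2012optimal}---so there is no in-paper argument to compare against; your derivation (first-order variational inequality for the convex composite objective, combined with the Bregman three-point identity $V(\tilde{x},u)-V(\tilde{x},u^*)-V(u^*,u)=\langle\nabla h(u^*)-\nabla h(\tilde{x}),u-u^*\rangle$) is exactly the standard proof given in that reference, and your care in phrasing the optimality condition as a variational inequality with a single subgradient $g\in\partial p(u^*)$ correctly handles the non-smoothness of $p$ and the constraint set $X$.
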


% Here, $\nabla_{w,t}$ is the surrogates of the underlying gradient $\nabla_{w}G(w_t^{in})$.
% We first have the following lemma to quantify the gap between them. 
To show the convergence of the algorithm, we next need to connect the proximal update, based on the surrogate gradient $\nabla_{wt}$, with the true gradient $\nabla_w G(w_t^{pr})$. 

\begin{restatable}{lemma}{lemmab}\label{lem:boundproximal}
The following inequalities hold for any $w$:
\begin{align*}
    \langle\nabla_w G(w_t^{pr}),w_t-w\rangle
    \leq& H(w)-H(w_t)+\left(\frac{1}{2\eta_t}-\frac{1}{2}\right)[\|w_{t-1}-w\|^2-\|w_t-w\|^2]\\
    &-\frac{1}{2\eta_t}\|w_t-w_{t-1}\|^2+\frac{1}{2}\|\delta_t\|^2,\\
    \langle\nabla_w G(w_t^{pr}),w_t^{ag}-w\rangle
    \leq& H(w)-H(w_t^{ag})+\frac{1}{2\beta_t}[\|w_t^{pr}-w\|^2-\|w_t^{ag}-w_{t}^{pr}\|^2]+\frac{1}{2}\|\delta_t\|^2.
\end{align*}
\end{restatable}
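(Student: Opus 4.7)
The two inequalities are one-step "descent" bounds that connect the approximate proximal updates, taken with the surrogate gradient $\nabla_{w,t}$, to the exact gradient $\nabla_w G(w_t^{pr})$. I would prove them in parallel, treating $w_t$ and $w_t^{ag}$ uniformly as proximal maps of linear perturbations of a base point that use the same error vector $\delta_t$ supplied by Lemma \ref{lem:boundgap}.

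The first step is a standard three-point inequality for the prox. Starting from $w_t=\prox_{\eta_t H}(w_{t-1}-\eta_t\nabla_{w,t})$, I would invoke Lemma \ref{lemmaaux2} with $h(x)=\tfrac{1}{2}\|x\|^2$ (so that $V(x,z)=\tfrac{1}{2}\|z-x\|^2$), $p(u)=H(u)+\langle\nabla_{w,t},u\rangle$, $\tilde{x}=w_{t-1}$, $\mu_1=1/\eta_t$, $\mu_2=0$, obtaining
\begin{equation*}
\langle\nabla_{w,t},w_t-w\rangle+H(w_t)-H(w)\leq \frac{1}{2\eta_t}\bigl[\|w_{t-1}-w\|^2-\|w_t-w\|^2-\|w_t-w_{t-1}\|^2\bigr].
\end{equation*}
The analogue for $w_t^{ag}$ follows from the same argument with $(w_{t-1},\eta_t)$ replaced by $(w_t^{pr},\beta_t)$. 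I would then invoke Lemma \ref{lem:boundgap} to decompose $\nabla_{w,t}=\nabla_w G(w_t^{pr})+\delta_t$ and separate the two contributions on the left-hand side, leaving a residual $-\langle\delta_t,w_t-w\rangle$ (resp.\ $-\langle\delta_t,w_t^{ag}-w\rangle$) on the right.

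The final step is Young's inequality $-\langle\delta_t,w_t-w\rangle\leq \tfrac{1}{2}\|\delta_t\|^2+\tfrac{1}{2}\|w_t-w\|^2$; the $\tfrac{1}{2}\|w_t-w\|^2$ piece combines with the $-\frac{1}{2\eta_t}\|w_t-w\|^2$ from the previous step to produce $-(\frac{1}{2\eta_t}-\tfrac{1}{2})\|w_t-w\|^2$, and a symmetric regrouping pairs this with the $\|w_{t-1}-w\|^2$ term into the telescoping bracket $(\frac{1}{2\eta_t}-\tfrac{1}{2})[\|w_{t-1}-w\|^2-\|w_t-w\|^2]$ that appears in the claim; the $-\frac{1}{2\eta_t}\|w_t-w_{t-1}\|^2$ and $\tfrac{1}{2}\|\delta_t\|^2$ summands are carried through unchanged. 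For $w_t^{ag}$ the same Young's step produces a term $-(\frac{1}{2\beta_t}-\tfrac{1}{2})\|w_t^{ag}-w\|^2$ which is nonpositive in the stepsize regime of Theorem \ref{theorem1}; dropping this nonpositive term yields the cleaner second inequality.

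The mechanics of each step are routine, so the main obstacle is the bookkeeping in the final regrouping: the Young's splitting and the redistribution of squared-norm terms must be carried out so that the coefficient in front of the telescoping bracket is \emph{precisely} $\frac{1}{2\eta_t}-\tfrac{1}{2}$, because this factor is what couples with the monotonicity condition $\frac{\alpha_t}{\Gamma_t}(\frac{1}{\eta_t}-1)\geq \frac{\alpha_{t+1}}{\Gamma_{t+1}}(\frac{1}{\eta_{t+1}}-1)$ used downstream in the proof of Theorem \ref{theorem1}. A different Young's splitting still produces a valid bound but with coefficients incompatible with the prescribed stepsize schedule, which would obstruct the telescoping used to obtain the $O(1/T)$ rate.
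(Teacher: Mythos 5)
Your proposal follows the paper's proof essentially step for step: the same three-point inequality from Lemma \ref{lemmaaux2} applied to the two proximal updates (with base points $w_{t-1}$ and $w_t^{pr}$ and stepsizes $\eta_t$ and $\beta_t$), the same substitution $\nabla_{w,t}=\nabla_w G(w_t^{pr})+\delta_t$ from Lemma \ref{lem:boundgap}, the same Young's inequality on the residual cross term, and the same dropping of the nonpositive $-\left(\frac{1}{2\beta_t}-\frac{1}{2}\right)\|w_t^{ag}-w\|^2$ under $\beta_t\le 1$ to obtain the second inequality. One caveat, which you inherit from the paper rather than introduce: in the first inequality the final ``symmetric regrouping'' is not an identity --- after Young's inequality the bound is $\frac{1}{2\eta_t}\|w_{t-1}-w\|^2-\left(\frac{1}{2\eta_t}-\frac{1}{2}\right)\|w_t-w\|^2$, which exceeds the claimed bracket $\left(\frac{1}{2\eta_t}-\frac{1}{2}\right)[\|w_{t-1}-w\|^2-\|w_t-w\|^2]$ by $\frac{1}{2}\|w_{t-1}-w\|^2\ge 0$, so the stated right-hand side is actually smaller than what the argument delivers; the paper makes the identical replacement, and the surplus term is bounded via Assumption \ref{assum2} but cannot simply be discarded.
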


\section{Proof of Lemma \ref{lem:boundgap}}

\lemmaa*

\begin{proof}
Let $\tw^{i*}_t=(\tphi_t^{i*},\ttheta_t^{i*})$ denote the optimal solution to the local optimization problem \eqref{localobject}. We first investigate the structural property of the meta-learning objective function. 
% It can be seen that
% \begin{align}
%     F(\phi_t^{in},\theta_t^{in})=&\|\phi_t^{in}\|_1+\frac{1}{N}\sum_{i=1}^N \left\{L_i(\tphi_t^{i*},\ttheta_t^{i*})+\frac{\lambda}{2}\|\tphi_t^{i*}-\phi_t^{in}\|^2_2+\frac{\lambda}{2}\|\ttheta_t^{i*}-\theta_t^{in}\|^2_2\right\}\\
%     =& H(\phi_t^{in})+G(\phi_t^{in},\theta_t^{in}).
% \end{align}
Since $L_i$ is $\rho$-smooth, for $\lambda>\rho$, we can know that $G$ is $\frac{\lambda \rho}{\lambda+\rho}$-smooth based on Lemma \ref{lem:smooth}. Then, based on the chain rule, we can know 
\begin{align*}
    \nabla_w G(w_t^{pr})&=\frac{1}{N}\sum_{i=1}^N\left\{\left(\frac{\partial \tw_t^{i*}}{\partial w}\right)^T\bigg|_{w=w_t^{pr}}\nabla_w L_i(\tw_t^{i*})+\lambda\left(I-\left(\frac{\partial \tw_t^{i*}}{\partial w}\right)^T\bigg|_{w=w_t^{pr}}\right)(w_t^{pr}-\tw_t^{i*})\right\}\\
    &=\frac{1}{N}\sum_{i=1}^N\left\{\lambda (w_t^{pr}-\tw_t^{i*})+\left(\frac{\partial \tw_t^{i*}}{\partial w}\right)^T\bigg|_{w=w_t^{pr}}[\nabla_w L_i(\tw_t^{i*})-\lambda(w_t^{pr}-\tw_t^{i*})]\right\}\\
    &=\frac{\lambda}{N}\sum_{i=1}^N (w_t^{pr}-\tw_t^{i*})
\end{align*}
where the last equality holds because the following first-order optimality condition for $\tw_t^{i*}$:
\begin{align*}
    \nabla_w L_i(\tw_t^{i*})+\lambda(\tw_t^{i*}-w_t^{pr})=0.
\end{align*}
Therefore, we can have that
\begin{align*}
    \nabla_{w,t}=&\frac{\lambda}{N}\sum_{i=1}^N (w_t^{pr}-\tw_t^i)\\
    =&\frac{\lambda}{N}\sum_{i=1}^N(w_t^{pr}-\tw_t^{i*}+\tw_t^{i*}-\tw_t^i)\\
    =& \nabla_w G(w_t^{pr})+\frac{\lambda}{N}\sum_{i=1}^N(\tw_t^{i*}-\tw_t^i)\\
    =& \nabla_w G(w_t^{pr})+\delta_t.
\end{align*}

Moreover, it is clear that $G_i$ is $(\lambda-\rho)$-strongly convex. Based on Lemma \ref{lemmaaux3}, it follows that
\begin{align*}
    \|\tw^{i*}_t-\tw^i_t\|^2\leq \frac{1}{(\lambda-\rho)^2}\|\nabla_w G_i(\tw^i_t)\|^2\leq \frac{\xi_t}{(\lambda-\rho)^2}.
\end{align*}
Hence, we can have
\begin{align*}
    \|\delta_t\|^2=&\lambda^2\|\frac{1}{N}\sum_{i=1}^N(\tw_t^{i*}-\tw_t^i)\|^2\\
    \leq&\frac{\lambda^2}{N}\sum_{i=1}^N \|\tw_t^{i*}-\tw_t^i\|^2\\
    \leq& \frac{\lambda^2 \xi_t}{(\lambda-\rho)^2},
\end{align*}
thereby completing the proof of Lemma \ref{lem:boundgap}.

\end{proof}

\section{Proof of Lemma \ref{lem:boundproximal}}

\lemmab*

\begin{proof}
From the definition of the proximal update for the solution $w_t$, and Lemma \ref{lemmaaux2}, we can obtain that for any $w$
\begin{align}
    \langle\nabla_{w,t}, w_t-w\rangle+H(w_t)\leq H(w)+\frac{1}{2\eta_t}[\|w_{t-1}-w\|^2-\|w_t-w\|^2-\|w_t-w_{t-1}\|^2].
\end{align}
Based on Lemma \ref{lem:boundgap}, it follows that
\begin{align*}
    &\langle \nabla_w G(w_t^{pr})+\delta_t, w_t-w\rangle+H(w_t)\\
    =& \langle\nabla_w G(w_t^{pr}),w_t-w\rangle+\langle\delta_t,w_t-w \rangle\\
    \leq& H(w)+\frac{1}{2\eta_t}[\|w_{t-1}-w\|^2-\|w_t-w\|^2-\|w_t-w_{t-1}\|^2],
\end{align*}
such that 
\begin{align*}
    &\langle\nabla_w G(w_t^{pr}),w_t-w\rangle\\
    \leq& H(w)-H(w_t)+\frac{1}{2\eta_t}[\|w_{t-1}-w\|^2-\|w_t-w\|^2-\|w_t-w_{t-1}\|^2]-\langle\delta_t,w_t-w \rangle\\
    \leq&H(w)-H(w_t)+\frac{1}{2\eta_t}[\|w_{t-1}-w\|^2-\|w_t-w\|^2-\|w_t-w_{t-1}\|^2]+\frac{1}{2}\|\delta_t\|^2+\frac{1}{2}\|w_t-w\|^2\\
    \leq&H(w)-H(w_t)+\left(\frac{1}{2\eta_t}-\frac{1}{2}\right)[\|w_{t-1}-w\|^2-\|w_t-w\|^2]-\frac{1}{2\eta_t}\|w_t-w_{t-1}\|^2+\frac{1}{2}\|\delta_t\|^2.
\end{align*}
Similarly, for the solution $w_t^{ag}$, we have
\begin{align}
    \langle\nabla_{w,t}, w_t^{ag}-w\rangle+H(w_t^{ag})\leq H(w)+\frac{1}{2\beta_t}[\|w_t^{pr}-w\|^2-\|w_t^{ag}-w\|^2-\|w_t^{ag}-w_{t}^{pr}\|^2].
\end{align}
Therefore, it follows that 
\begin{align*}
    &\langle\nabla_w G(w_t^{pr}),w_t^{ag}-w\rangle\\
    \leq& H(w)-H(w_t^{ag})+\frac{1}{2\beta_t}[\|w_t^{pr}-w\|^2-\|w_t^{ag}-w\|^2-\|w_t^{ag}-w_{t}^{pr}\|^2]+\frac{1}{2}\|\delta_t\|^2+\frac{1}{2}\|w_t^{ag}-w\|^2\\
    =& H(w)-H(w_t^{ag})+\frac{1}{2\beta_t}[\|w_t^{pr}-w\|^2-\|w_t^{ag}-w_{t}^{pr}\|^2]+\frac{1}{2}\|\delta_t\|^2-\left(\frac{1}{2\beta_t}-\frac{1}{2}\right)\|w_t^{ag}-w\|^2\\
    \leq& H(w)-H(w_t^{ag})+\frac{1}{2\beta_t}[\|w_t^{pr}-w\|^2-\|w_t^{ag}-w_{t}^{pr}\|^2]+\frac{1}{2}\|\delta_t\|^2
\end{align*}
for $\beta_t\leq1$.
\end{proof}

\section{Proof of Theorem \ref{theorem1}}

\theorema*

\begin{proof}
Based on the smoothness of $G$, we can conclude that
\begin{align}\label{smooth1}
    G(w_t^{ag})\leq& G(w_t^{pr})+\langle\nabla_w G(w_t^{pr}), w_t^{ag}-w_t^{pr}\rangle+\frac{\lambda \rho}{2(\lambda+\rho)}\|w_t^{ag}-w_t^{pr}\|^2,
\end{align}
\begin{align}\label{smooth2}
    G(w_t^{pr})\leq& G(w)+\langle \nabla_w G(w_t^{pr}), w_t^{pr}-w\rangle+\frac{\lambda \rho}{2(\lambda+\rho)}\|w-w_t^{pr}\|^2,
\end{align}
\begin{align}\label{smooth3}
    G(w_t^{pr})\leq& G(w_{t-1}^{ag})+\langle \nabla_w G(w_t^{pr}), w_t^{pr}-w_{t-1}^{ag}\rangle+\frac{\lambda \rho}{2(\lambda+\rho)}\|w_{t-1}^{ag}-w_t^{pr}\|^2.
\end{align}
It follows that
\begin{align}\label{decompose}
    &G(w_t^{pr})-[(1-\alpha_t)G(w_{t-1}^{ag})+\alpha_t G(w)]\nonumber\\
    =&(1-\alpha_t)[G(w_t^{pr})-G(w_{t-1}^{ag})]+\alpha_t[G(w_t^{pr})-G(w)]\nonumber\\
    \leq& (1-\alpha_t)\left[\langle \nabla_w G(w_t^{pr}), w_t^{pr}-w_{t-1}^{ag}\rangle+\frac{\lambda \rho}{2(\lambda+\rho)}\|w_{t-1}^{ag}-w_t^{pr}\|^2\right]\nonumber\\
    &+\alpha_t\left[\langle \nabla_w G(w_t^{pr}), w_t^{pr}-w\rangle+\frac{\lambda \rho}{2(\lambda+\rho)}\|w-w_t^{pr}\|^2\right]\nonumber\\
    =&\langle\nabla_w G(w_t^{pr}), w_t^{pr}-[(1-\alpha_t)w_{t-1}^{ag}+\alpha_t w]\rangle+\frac{\lambda \rho(1-\alpha_t)}{2(\lambda+\rho)}\|w_{t-1}^{ag}-w_t^{pr}\|^2+\frac{\lambda \rho\alpha_t}{2(\lambda+\rho)}\|w-w_t^{pr}\|^2\nonumber\\
    =&\langle\nabla_w G(w_t^{pr}), w_t^{pr}-[(1-\alpha_t)w_{t-1}^{ag}+\alpha_t w]\rangle+\frac{\lambda \rho(1-\alpha_t)\alpha_t^2}{2(\lambda+\rho)}\|w_{t-1}^{ag}-w_{t-1}\|^2+\frac{\lambda \rho\alpha_t}{2(\lambda+\rho)}\|w-w_t^{pr}\|^2
\end{align}
where the last equality is true because
\begin{align*}
    \|w_{t-1}^{ag}-w_t^{pr}\|=&\|w_{t-1}^{ag}-\alpha_t w_{t-1}-(1-\alpha_t)w_{t-1}^{ag}\|\\
    =&\alpha_t\|w_{t-1}^{ag}-w_{t-1}\|.
\end{align*}

Combining \eqref{smooth1} and \eqref{decompose}, we can have that
\begin{align}\label{firsttwo}
    G(w_t^{ag})\leq& (1-\alpha_t)G(w_{t-1}^{ag})+\alpha_tG(w)+\langle\nabla_w G(w_t^{pr}), w_t^{ag}-w_t^{pr}\rangle+\frac{\lambda \rho}{2(\lambda+\rho)}\|w_t^{ag}-w_t^{pr}\|^2\nonumber\\
    &+\langle\nabla_w G(w_t^{pr}), w_t^{pr}-[(1-\alpha_t)w_{t-1}^{ag}+\alpha_t w]\rangle+\frac{\lambda \rho(1-\alpha_t)\alpha_t^2}{2(\lambda+\rho)}\|w_{t-1}^{ag}-w_{t-1}\|^2+\frac{\lambda \rho\alpha_t}{2(\lambda+\rho)}\|w-w_t^{pr}\|^2\nonumber\\
    =&(1-\alpha_t)G(w_{t-1}^{ag})+\alpha_tG(w)+\langle\nabla_w G(w_t^{pr}),w_t^{ag}-[(1-\alpha_t)w_{t-1}^{ag}+\alpha_t w]\rangle\nonumber\\
    &+\frac{\lambda \rho}{2(\lambda+\rho)}\|w_t^{ag}-w_t^{pr}\|^2+\frac{\lambda \rho(1-\alpha_t)\alpha_t^2}{2(\lambda+\rho)}\|w_{t-1}^{ag}-w_{t-1}\|^2+\frac{\lambda \rho\alpha_t}{2(\lambda+\rho)}\|w-w_t^{pr}\|^2.
\end{align}

Next, we need to bound the term $\langle\nabla_w G(w_t^{pr}),w_t^{ag}-[(1-\alpha_t)w_{t-1}^{ag}+\alpha_t w]\rangle$.
Based on Lemma \ref{lem:boundproximal}, it follows that
\begin{align*}
    &\langle\nabla_w G(w_t^{pr}), w_t^{ag}-[(1-\alpha_t)w_{t-1}^{ag}+\alpha_t w]\rangle\\
  =&\langle\nabla_w G(w_t^{pr}), w_t^{ag}-(1-\alpha_t)w_{t-1}^{ag}-\alpha_t w_t\rangle+\alpha_t\langle\nabla_w G(w_t^{pr}),w_t-w\rangle\\
  \leq& H((1-\alpha_t)w_{t-1}^{ag}+\alpha_t w_t)-H(w_t^{ag})+\frac{1}{2\beta_t}[\|w_t^{pr}-(1-\alpha_t)w_{t-1}^{ag}-\alpha_t w_t\|^2-\|w_t^{ag}-w_t^{pr}\|^2]+\frac{1}{2}\|\delta_t\|^2\\
  &+\alpha_t H(w)-\alpha_t H(w_t)+\alpha_t\left(\frac{1}{2\eta_t}-\frac{1}{2}\right)[\|w_{t-1}-w\|^2-\|w_t-w\|^2]
  -\frac{\alpha_t}{2\eta_t}\|w_t-w_{t-1}\|^2+\frac{\alpha_t}{2}\|\delta_t\|^2\\
  \overset{(a)}{\leq}&(1-\alpha_t)H(w_{t-1}^{ag})+\alpha_tH(w_t)-H(w_t^{ag})+\frac{1}{2\beta_t}[\|w_t^{pr}-(1-\alpha_t)w_{t-1}^{ag}-\alpha_t w_t\|^2-\|w_t^{ag}-w_t^{pr}\|^2]+\frac{1}{2}\|\delta_t\|^2\\
  &+\alpha_t H(w)-\alpha_t H(w_t)+\alpha_t\left(\frac{1}{2\eta_t}-\frac{1}{2}\right)[\|w_{t-1}-w\|^2-\|w_t-w\|^2]
  -\frac{\alpha_t}{2\eta_t}\|w_t-w_{t-1}\|^2+\frac{\alpha_t}{2}\|\delta_t\|^2\\
  \overset{(b)}{\leq}&(1-\alpha_t)H(w_{t-1}^{ag})+\alpha_tH(w)-H(w_t^{ag})+\frac{1}{2\beta_t}[\alpha_t^2\|w_t-w_{t-1}\|^2-\|w_t^{ag}-w_t^{pr}\|^2]+\frac{1+\alpha_t}{2}\|\delta_t\|^2\\
  &+\alpha_t\left(\frac{1}{2\eta_t}-\frac{1}{2}\right)[\|w_{t-1}-w\|^2-\|w_t-w\|^2]
  -\frac{\alpha_t}{2\eta_t}\|w_t-w_{t-1}\|^2\\
  =&(1-\alpha_t)H(w_{t-1}^{ag})+\alpha_tH(w)-H(w_t^{ag})+\alpha_t\left(\frac{1}{2\eta_t}-\frac{1}{2}\right)[\|w_{t-1}-w\|^2-\|w_t-w\|^2]\\
  &+\left(\frac{\alpha_t^2}{2\beta_t}-\frac{\alpha_t}{2\eta_t}\right)\|w_t-w_{t-1}\|^2+\frac{1+\alpha_t}{2}\|\delta_t\|^2-\frac{1}{2\beta_t}\|w_t^{ag}-w_t^{pr}\|^2\\
  \overset{(c)}{\leq}& (1-\alpha_t)H(w_{t-1}^{ag})+\alpha_tH(w)-H(w_t^{ag})+\alpha_t\left(\frac{1}{2\eta_t}-\frac{1}{2}\right)[\|w_{t-1}-w\|^2-\|w_t-w\|^2]\\
  &+\frac{1+\alpha_t}{2}\|\delta_t\|^2-\frac{1}{2\beta_t}\|w_t^{ag}-w_t^{pr}\|^2
\end{align*}
where (a) holds because of the convexity of $H$, (b) is true because $w_t^{pr}=(1-\alpha_t)w_{t-1}^{ag}+\alpha_t w_{t-1}$, and (c) is true because $\alpha_t\eta_t\leq\beta_t$.

Continuing with \eqref{firsttwo}, we can have that
\begin{align*}
    G(w_t^{ag})\leq& (1-\alpha_t)G(w_{t-1}^{ag})+\alpha_t G(w)+(1-\alpha_t)H(w_{t-1}^{ag})+\alpha_t H(w)-H(w_t^{ag})\\
    &+\frac{\lambda \rho}{2(\lambda+\rho)}\|w_t^{ag}-w_t^{pr}\|^2+\frac{\lambda \rho(1-\alpha_t)\alpha_t^2}{2(\lambda+\rho)}\|w_{t-1}^{ag}-w_{t-1}\|^2+\frac{\lambda \rho\alpha_t}{2(\lambda+\rho)}\|w-w_t^{pr}\|^2\\
    &+\alpha_t\left(\frac{1}{2\eta_t}-\frac{1}{2}\right)[\|w_{t-1}-w\|^2-\|w_t-w\|^2]+\frac{1+\alpha_t}{2}\|\delta_t\|^2-\frac{1}{2\beta_t}\|w_t^{ag}-w_t^{pr}\|^2\\
    =&(1-\alpha_t)G(w_{t-1}^{ag})+\alpha_t G(w)+(1-\alpha_t)H(w_{t-1}^{ag})+\alpha_t H(w)-H(w_t^{ag})\\
    &+\left(\frac{\lambda \rho}{2(\lambda+\rho)}-\frac{1}{2\beta_t}\right)\|w_t^{ag}-w_t^{pr}\|^2+\frac{\lambda \rho(1-\alpha_t)\alpha_t^2}{2(\lambda+\rho)}\|w_{t-1}^{ag}-w_{t-1}\|^2+\frac{\lambda \rho\alpha_t}{2(\lambda+\rho)}\|w-w_t^{pr}\|^2\\
    &+\alpha_t\left(\frac{1}{2\eta_t}-\frac{1}{2}\right)[\|w_{t-1}-w\|^2-\|w_t-w\|^2]+\frac{1+\alpha_t}{2}\|\delta_t\|^2
\end{align*}
which indicates that
\begin{align*}
    F(w_t^{ag})\leq& (1-\alpha_t)F(w_{t-1}^{ag})+\alpha_t F(w)+\left(\frac{\lambda \rho}{2(\lambda+\rho)}-\frac{1}{2\beta_t}\right)\|w_t^{ag}-w_t^{pr}\|^2+\frac{\lambda \rho(1-\alpha_t)\alpha_t^2}{2(\lambda+\rho)}\|w_{t-1}^{ag}-w_{t-1}\|^2\\
    &+\frac{\lambda \rho\alpha_t}{2(\lambda+\rho)}\|w-w_t^{pr}\|^2+\alpha_t\left(\frac{1}{2\eta_t}-\frac{1}{2}\right)[\|w_{t-1}-w\|^2-\|w_t-w\|^2]+\frac{1+\alpha_t}{2}\|\delta_t\|^2.
\end{align*}

Based on Lemma \ref{lemmaaux1}, it follows that
\begin{align}\label{sum}
    &\frac{F(w_T^{ag})-F(w)}{\Gamma_T}\nonumber\\
    \leq& \sum_{t=1}^T\left\{\frac{1}{\Gamma_t}\left(\frac{\lambda \rho}{2(\lambda+\rho)}-\frac{1}{2\beta_t}\right)\|w_t^{ag}-w_t^{pr}\|^2+\frac{\lambda \rho(1-\alpha_t)\alpha_t^2}{2\Gamma_t(\lambda+\rho)}\|w_{t-1}^{ag}-w_{t-1}\|^2\right\}\nonumber\\
    &+\sum_{t=1}^T\left\{\frac{\lambda \rho\alpha_t}{2\Gamma_t(\lambda+\rho)}\|w-w_t^{pr}\|^2+\frac{\alpha_t}{\Gamma_t}\left(\frac{1}{2\eta_t}-\frac{1}{2}\right)[\|w_{t-1}-w\|^2-\|w_t-w\|^2]+\frac{1+\alpha_t}{2\Gamma_t}\|\delta_t\|^2\right\}\nonumber\\
    \leq& \sum_{t=1}^T\left\{\frac{1}{\Gamma_t}\left(\frac{\lambda \rho}{2(\lambda+\rho)}-\frac{1}{2\beta_t}\right)\|w_t^{ag}-w_t^{pr}\|^2+\frac{\lambda \rho(1-\alpha_t)\alpha_t^2}{2\Gamma_t(\lambda+\rho)}\|w_{t-1}^{ag}-w_{t-1}\|^2\right\}\nonumber\\
    &+\sum_{t=1}^T\left\{\frac{\lambda \rho\alpha_t}{2\Gamma_t(\lambda+\rho)}\|w-w_t^{pr}\|^2+\frac{1+\alpha_t}{2\Gamma_t}\|\delta_t\|^2\right\}+\left(\frac{1}{2\eta_1}-\frac{1}{2}\right)\|w_0-w\|^2
\end{align}
where the last inequality is true because
\begin{align*}
    &\sum_{t=1}^T\frac{\alpha_t}{\Gamma_t}\left(\frac{1}{2\eta_t}-\frac{1}{2}\right)[\|w_{t-1}-w\|^2-\|w_t-w\|^2]\\
    \leq& \frac{\alpha_1}{\Gamma_1}\left(\frac{1}{2\eta_1}-\frac{1}{2}\right)\sum_{t=1}^T [\|w_{t-1}-w\|^2-\|w_t-w\|^2]\\
    =&\left(\frac{1}{2\eta_1}-\frac{1}{2}\right)[\|w_0-w\|^2-\|w_T-w\|^2]\\
    \leq& \left(\frac{1}{2\eta_1}-\frac{1}{2}\right)\|w_0-w\|^2
\end{align*}
considering that $\frac{\alpha_t}{\Gamma_t}\left(\frac{1}{2\eta_t}-\frac{1}{2}\right)\geq\frac{\alpha_{t+1}}{\Gamma_{t+1}}\left(\frac{1}{2\eta_{t+1}}-\frac{1}{2}\right)$ for any $t\geq 1$.

Therefore, by rearranging \eqref{sum}, we can have that for any $w$ the following holds
\begin{align*}
    &\sum_{t=1}^T\frac{1}{\Gamma_t}\left(\frac{1}{2\beta_t}-\frac{\lambda \rho}{2(\lambda+\rho)}\right)\|w_t^{ag}-w_t^{pr}\|^2\\
    \leq& \frac{F(w)-F(w_T^{ag})}{\Gamma_T}+\sum_{t=1}^T\frac{\lambda \rho\alpha_t}{2\Gamma_t(\lambda+\rho)}[(1-\alpha_t)\alpha_t\|w_{t-1}^{ag}-w_{t-1}\|^2+\|w-w_t^{pr}\|^2]\\
    &+\left(\frac{1}{2\eta_1}-\frac{1}{2}\right)\|w_0-w\|^2+\sum_{t=1}^T \frac{1+\alpha_t}{2\Gamma_t}\|\delta_t\|^2\\
    \leq&\frac{F(w)-F(w_T^{ag})}{\Gamma_T}+\sum_{t=1}^T\frac{\lambda \rho\alpha_t}{\Gamma_t(\lambda+\rho)}[\alpha_t(1-\alpha_t)(\|w_{t-1}^{ag}\|^2+\|w_{t-1}\|^2)+\|w\|^2+\|w_t^{pr}\|^2]\\
    &+\left(\frac{1}{2\eta_1}-\frac{1}{2}\right)\|w_0-w\|^2+\sum_{t=1}^T \frac{1+\alpha_t}{2\Gamma_t}\|\delta_t\|^2\\
    \overset{(a)}{\leq}&\frac{F(w)-F(w_T^{ag})}{\Gamma_T}+\sum_{t=1}^T\frac{\lambda \rho\alpha_t}{\Gamma_t(\lambda+\rho)}[\alpha_t(1-\alpha_t)(\|w_{t-1}^{ag}\|^2+\|w_{t-1}\|^2)+\|w\|^2+(1-\alpha_t)\|w_{t-1}^{ag}\|^2+\alpha_t\|w_{t-1}\|^2]\\
    &+\left(\frac{1}{2\eta_1}-\frac{1}{2}\right)\|w_0-w\|^2+\sum_{t=1}^T \frac{1+\alpha_t}{2\Gamma_t}\|\delta_t\|^2\\
    \leq& \frac{F(w)-F(w_T^{ag})}{\Gamma_T}+\sum_{t=1}^T\frac{\lambda \rho\alpha_t}{\Gamma_t(\lambda+\rho)}[\|w\|^2+\|w_{t-1}^{ag}\|^2+\|w_{t-1}\|^2]+\left(\frac{1}{2\eta_1}-\frac{1}{2}\right)\|w_0-w\|^2+\sum_{t=1}^T \frac{1+\alpha_t}{2\Gamma_t}\|\delta_t\|^2\\
    \overset{(b)}{\leq}&\frac{F(w)-F(w_T^{ag})}{\Gamma_T}+\sum_{t=1}^T\frac{\lambda \rho\alpha_t}{\Gamma_t(\lambda+\rho)}[\|w\|^2+2M]+\left(\frac{1}{2\eta_1}-\frac{1}{2}\right)\|w_0-w\|^2+\sum_{t=1}^T \frac{1+\alpha_t}{2\Gamma_t}\|\delta_t\|^2\\
    \overset{(c)}{=}&\frac{F(w)-F(w_T^{ag})}{\Gamma_T}+\frac{\lambda \rho}{\Gamma_T(\lambda+\rho)}(\|w\|^2+2M)+\left(\frac{1}{2\eta_1}-\frac{1}{2}\right)\|w_0-w\|^2+\sum_{t=1}^T \frac{1+\alpha_t}{2\Gamma_t}\|\delta_t\|^2
\end{align*}
where (a) is because of the convexity of $\|\cdot\|^2$ and $w_t^{pr}=(1-\alpha_t)w_{t-1}^{ag}+\alpha_t w_{t-1}$, (b) is because the solution to the proximal optimization problem is bound from above, and (c) holds because
\begin{align*}
    \sum_{t=1}^T\frac{\alpha_t}{\Gamma_t}=\frac{1}{\Gamma_1}+\sum_{t=2}^T \left(\frac{1}{\Gamma_t}-\frac{1}{\Gamma_{t-1}}\right)=\frac{1}{\Gamma_T}.
\end{align*}

Let $w^*$ be the optimal solution of $F(w)$, i.e., $F(w^*)=\min_w F(w)$. Since $\mathcal{Q}(w_t^{pr},\nabla_w G(w_t^{pr}),\beta_t)=\frac{1}{\beta_t}(w_t^{pr}-w_t^{ag})$, it is clear that
\begin{align*}
    &\sum_{t=1}^T\frac{\beta_t^2}{\Gamma_t}\left(\frac{1}{2\beta_t}-\frac{\lambda \rho}{2(\lambda+\rho)}\right)\|\mathcal{Q}(w_t^{pr},\nabla_w G(w_t^{pr}),\beta_t)\|^2\\
    \leq& \frac{F(w^*)-F(w_T^{ag})}{\Gamma_T}+\frac{\lambda \rho}{\Gamma_T(\lambda+\rho)}(\|w^*\|^2+2M)+\left(\frac{1}{2\eta_1}-\frac{1}{2}\right)\|w_0-w^*\|^2+\sum_{t=1}^T \frac{1+\alpha_t}{2\Gamma_t}\|\delta_t\|^2\\
    \leq& \frac{\lambda \rho}{\Gamma_T(\lambda+\rho)}(\|w^*\|^2+2M)+\left(\frac{1}{2\eta_1}-\frac{1}{2}\right)\|w_0-w^*\|^2+\sum_{t=1}^T \frac{1+\alpha_t}{2\Gamma_t}\|\delta_t\|^2
\end{align*}
which implies that
\begin{align*}
    &\min_{t\in[1,T]} \|\mathcal{Q}(w_t^{pr},\nabla_w G(w_t^{pr}),\beta_t)\|^2\\
    \leq&\left[\sum_{t=1}^T\frac{\beta_t^2}{\Gamma_t}\left(\frac{1}{2\beta_t}-\frac{\lambda \rho}{2(\lambda+\rho)}\right)\right]^{-1}\left[\frac{\lambda \rho}{\Gamma_T(\lambda+\rho)}(\|w^*\|^2+2M)+\left(\frac{1}{2\eta_1}-\frac{1}{2}\right)\|w_0-w^*\|^2+\sum_{t=1}^T \frac{1+\alpha_t}{2\Gamma_t}\|\delta_t\|^2\right].
\end{align*}

For convenience, choose $\alpha_t=\frac{2}{t+1}$, $\beta_t<\frac{\lambda+\rho}{\lambda \rho}$, and $\eta_t$ to satisfy that
\begin{align*}
    \alpha_t\eta_t\leq \beta_t, ~ \frac{\alpha_t}{\Gamma_t}\left(\frac{1}{\eta_t}-1\right)\geq \frac{\alpha_{t+1}}{\Gamma_{t+1}}\left(\frac{1}{\eta_{t+1}}-1\right),
\end{align*}
for any $t\in[1,T]$. Then we can have
\begin{align*}
  & \min_{t\in[1,T]} \|\mathcal{Q}(w_t^{pr},\nabla_w G(w_t^{pr}),\beta_t)\|^2\\
  \leq& \frac{48\lambda \rho}{T^2(T+1)(\lambda+\rho)}\left[\frac{T(T+1)\lambda \rho}{2(\lambda+\rho)}(\|w^*\|^2+2M)+\left(\frac{1}{2\eta_1}-\frac{1}{2}\right)\|w_0-w^*\|^2+\frac{T(T+1)}{2}\sum_{t=1}^T\|\delta_t\|^2\right]\\
  =&\frac{24(\lambda \rho)^2}{T(\lambda+\rho)^2}(\|w^*\|^2+2M)+\frac{C}{T^2(T+1)}\|w_0-w^*\|^2+\frac{24\lambda \rho}{T(\lambda+\rho)}\sum_{t=1}^T\|\delta_t\|^2\\
  \leq&\frac{24(\lambda \rho)^2}{T(\lambda+\rho)^2}(\|w^*\|^2+2M)+\frac{C}{T^2(T+1)}\|w_0-w^*\|^2+\frac{24\lambda^3 \rho}{T(\lambda+\rho)(\lambda-\rho)^2}\sum_{t=1}^T
  \xi_t.
\end{align*}
\end{proof}

\section{More Experimental Results}

\subsection{Experimental setup}

To obtain a comprehensive understanding of the performance of MetaGater, we conduct more experiments of image classification on CIFAR-10 and CIFAR-100 \cite{krizhevsky2009learning} by using a DNN with more complex structure. Specifically, as shown in Fig. \ref{Fig:structure}, we consider a seven-layer convolutional neural network with six convolutional layers, followed by a fully connected layer. Each convolution layer is  followed by a ReLU activation layer. Besides, we also adopt two max pooling layers and one average pooling layer to shrink the feature map dimension.
And we integrate a channel gating module with the fifth and sixth convolutional layers, respectively.

\begin{figure}
\centering
\includegraphics[scale=0.18]{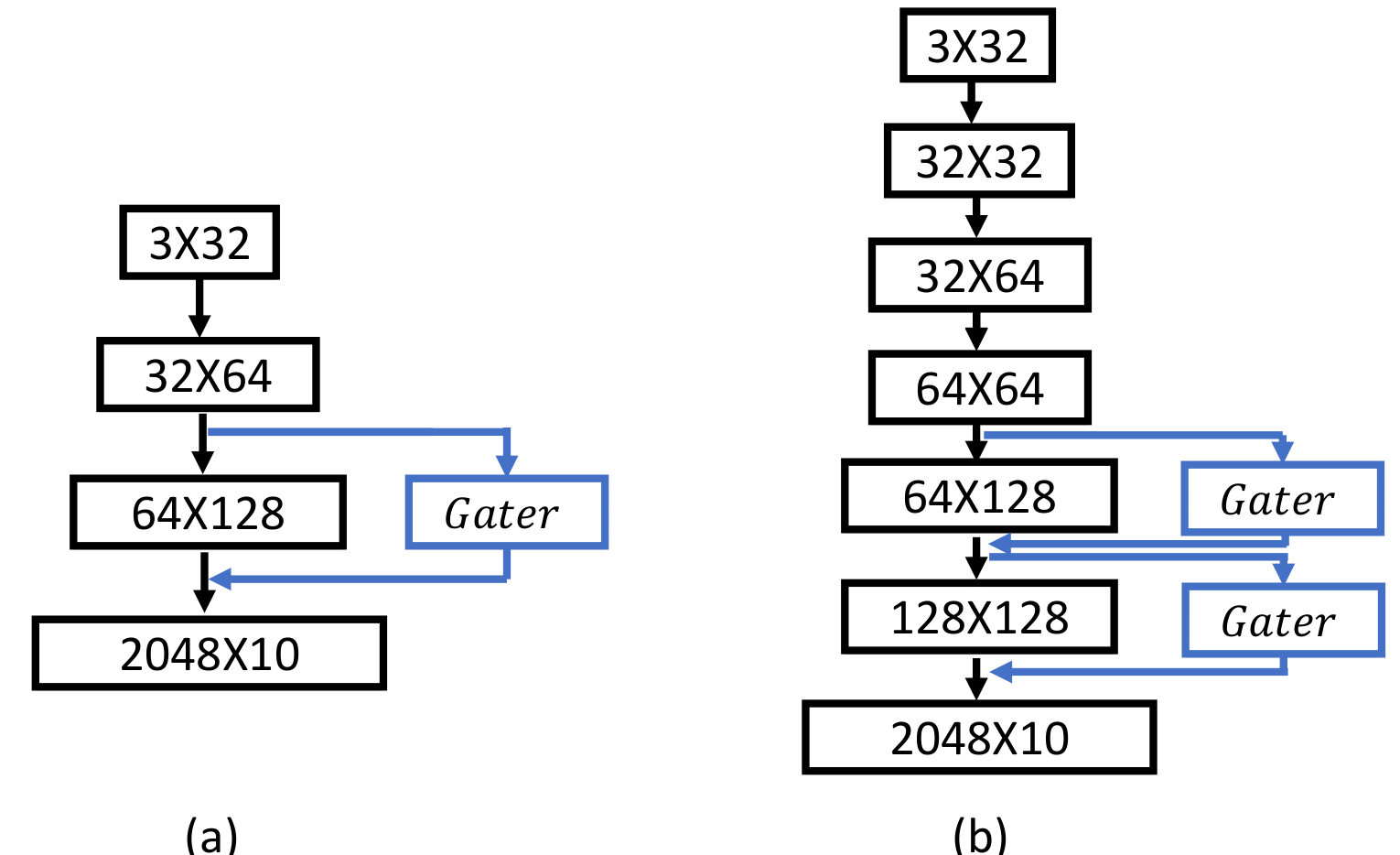}
\caption{The DNN architectures used in the experiments. We use (a) for experiments in Table \ref{tab:fedmeta}-\ref{tab:gating}, and (b) for experiments in Table \ref{tab:fedmeta-large}-\ref{tab:adapt-small}.}
\label{Fig:structure}
\end{figure}

It is worth noting that we integrate the channel gating module only at the layers near the output layer, aiming to 1) minimize the introduced model overhead, and 2) preserve the more important features in the layers near the input layer. In fact, we have also tried to integrate the channel gating module only at the second convolutional layer of the model (a) shown in Fig. \ref{Fig:structure}, and the testing accuracy degrades compared with the the marginal accuracy drop in Table \ref{tab:gating}. Such a performance degradation makes sense as the low-level features captured by the first few layers have more critical impact on the overall performance, in contrast to the high-level features in the last few layers.
Besides, we study the cross-entropy loss and use Group Lasso to prompt the sparsity of the channel gating module. 

Since a resource-limited node often has only a small local dataset for learning, 
we further investigate the performance of MetaGater under different sizes of local datasets. In particular, we consider two different regimes of the local dataset size:
\begin{itemize}
    \item \emph{Moderate local dataset size.} For CIFAR-10, we distribute the dataset among $N=20$ nodes, where each node only has the data samples from two classes and
    the number of samples per node is in the range of $[221, 2792]$. We randomly select $10$ nodes for each round in federated meta-learning, and $10$ target nodes for fast adaptation. 
    %For CIFAR-100, we distribute the dataset among $N=20$ nodes, where each node only has the data samples from five classess and the number of samples per node is in the range of $[, ]$. Similarly, we randomly select $?$ nodes for each round in federated meta-learning, and $?$ target nodes for fast adaptation. 
    \item \emph{Small local dataset size.} For both CIFAR-10 and CIFAR-100, we distribute the dataset among $N=20$ nodes, where
    the number of samples per node is in the range of $[30, 100]$.
    We randomly select $10$ nodes for each round in federated meta-learning, and $10$ target nodes for fast adaptation. 
    The difference is that each node has the data samples from two classes in CIFAR-10, but five classes in CIFAR-100.
    % For CIFAR-100, we distribute the dataset among $N=20$ nodes, where each node only has the data samples from five classess and the number of samples per node is in the range of $[, ]$. Similarly, we randomly select $?$ nodes for each round in federated meta-learning, and $?$ target nodes for fast adaptation. 
\end{itemize}
% \linote{Summarize the hyper-parameters in here: the learning rate, lambda, gamma keep same for all the experiments. Different things between large and small dataset are 1. local epochs (large uses 20, small uses 10). 2. small dataset totally has 20 nodes, and select 10 nodes for both for training and testing. 3. samples per node is in range of [30, 100] for both Cifar10 and Cifar100}

We evaluate the performance under a fixed number of communication rounds, i.e., $T=400$.  For $t$-th round,
the learning rate $\alpha_t=\frac{2}{t+1}$, and we choose $\beta_t=\alpha_t\eta_t=1$. Besides, $\lambda=0.2$. During federated meta-learning, we run gradient descent for one local update to solve the local minimization problem \eqref{localmin} for each training task.  For fast adaptation at target nodes, we only run one-step gradient descent to fine-tune both meta-backbone network and meta-gating module. We 
evaluate the testing accuracy at the target nodes,
and repeat all the experiments for 5 times to obtain the average performance.

\subsection{Performance of MetaGater with moderate local datasets}

We evaluate the performance of MetaGater with moderate local datasets on CIFAR-10. As shown in Table \ref{tab:fedmeta-large}, MetaGater still achieves the best accuracy performance. It is also worth to mention that for MetaGater with gating,
even though  extra training time is needed to jointly train the meta-backbone network and the meta-gating module compared to training MetaGater without gating, it still takes the similar training time but achieves much higher accuracy, in contrast to Per-FedAvg. Moreover, it can be seen from Table \ref{tab:adapt-large} that with channel gating module, the target node is able to quickly obtain a more compact model for efficient inference with almost same accuracy performance, compared with MetaGater without gating module. And  MetaGater clearly outperforms MetaSNIP in terms of the testing accuracy and demonstrates a larger diversity of the achieved model sparsity, in a similar speed with MetaSNIP.

\begin{table}
    \begin{center}
    \begin{tabular}{c|cccc}
         \toprule
         Dataset &
          Method &
          %\begin{tabular}[c]{@{}l@{}}
               Local 
               update 
          %\end{tabular} 
          & Accuracy(\%) & %\begin{tabular}[c]{@{}l@{}}
               Training  
               time(s)\\
          %\end{tabular}\\
          \hline
          \multirow{4}{*}{CIFAR-10}&
         FedAvg
            & 1 & $ 74.2\pm1.2 $ & 1677\\
         \cline{2-5}
         &
         Per-FedAvg
            & 1 & $ 76.8\pm1.8 $ & 3098\\
        \cline{2-5}
        &
               MetaGater
               w/o gating
            & 1 &  $\pmb{85.3\pm2.1}$  & 1939\\
            \cline{2-5}
         &
               MetaGater 
               w/ gating
             & 1 & $ \pmb{85.1\pm3.6} $ & 2558\\
            \hline
        %     \multirow{3}{*}{CIFAR-100}&
        % FedAvg
        %     & 1 & $ \pm $ & \\
        %  \cline{2-5}
        %  &
        %  Per-FedAvg
        %     & 1 & $ \pm $ & \\
        % \cline{2-5}
        % &
        %       MetaGater  
        %       w/o gating
        %     & 1 & $ \pm $ & \\
        %             \cline{2-5}
        %  &
        %       MetaGater 
        %       w/ gating
        %      & 1 & $ \pm $ & \\
        %     \hline
    \end{tabular}
    \end{center}
    \caption{Accuracy comparison for MetaGater, Per-FedAvg, FedAvg on CIFAR-10 with moderate local datasets. Clearly, MetaGater achieves the best accuracy among all methods. }
    \label{tab:fedmeta-large}
\end{table}

\begin{table}
    \begin{center}
    \begin{tabular}{c|cccc}
         \toprule
         Dataset &
          Method &
           Accuracy(\%) &
          Sparsity(\%) & 
               Learning  
               time(s)\\
          \hline
          \multirow{3}{*}{CIFAR-10}&
         MetaGater w/o gating
            & $ \pmb{85.3\pm2.1} $ & $ 0 $ & 1.6\\
         \cline{2-5}
         &
         MetaSNIP
            & $ 83.4\pm3.4 $ & $ 23\pm2.4 $ & 1.5\\
        \cline{2-5}
         &
               MetaGater 
               w/ gating
             & $ \pmb{85.1\pm3.6} $ & $ 23\pm3.7 $ & 1.4\\
            \hline
        %     \multirow{3}{*}{CIFAR-100}&
        % MetaGater w/o gating
        %     & $ \pm $ & $ \pm $ & \\
        %  \cline{2-5}
        %  &
        %  MetaSNIP
        %     & $ \pm $ & $ \pm $ & \\
        % \cline{2-5}
        %  &
        %       MetaGater 
        %       w/ gating
        %      & $ \pm $ & $ \pm $ & \\
        %     \hline
    \end{tabular}
    \end{center}
    \caption{Fast adaptation performance comparison on CIFAR-10 with moderate local datasets. Compared with MetaSNIP, MetaGater has a better accuracy and a larger sparsity range.}
    \label{tab:adapt-large}
\end{table}

\subsection{Performance of MetaGater with small local datasets}

We evaluate the performance of MetaGater with small local datasets on both CIFAR-10 and CIFAR-100. As illustrated in Table \ref{tab:fedmeta-small}, MetaGater significantly outperforms other federated meta-learning methods, and the performance gap is even larger compared with the case under moderate local datasets. And Table \ref{tab:adapt-small} shows that MetaGater still performs better than MetaSNIP. More importantly, as the model becomes deeper and the local datasets becomes smaller, the accuracy gap between MetaGater w/ gating and MetaGater w/o gating decreases, and MetaGater w/ gating can even achieve a better accuracy compared with MetaGater w/o gating. This is because the backbone network will become relatively overparameterized, and the channel gating module can accurately select the task-specific subnet with the data-dependent important filters that lead to a better accuracy performance. 

In a nutshell, under different model sizes and various local dataset sizes, MetaGater clearly achieves better accuracy compared with other federated meta-learning methods, and outperforms MetaSNIP, in the sense that a better accuracy and a larger sparsity diversity can be attained by MetaGater in a similar speed with MetaSNIP. Moreover, since the meta-gating module is able to effectively capture the important filters of a good meta-backbone network and hence sparsity structure across tasks, a task-specific conditional channel gated network for a new task can be quickly adapted from the meta-initialization with only a small local dataset. In this way, the obtained channel gated network with structural sparsity patterns significantly reduces the network size for computing while still guaranteeing the testing accuracy, by quickly selecting the important filters for the new task.

\begin{table}
    \begin{center}
    \begin{tabular}{c|cccc}
         \toprule
         Dataset &
          Method &
         % \begin{tabular}[c]{@{}l@{}}
               Local  
               update 
         % \end{tabular} 
          & Accuracy(\%) & %\begin{tabular}[c]{@{}l@{}}
               Training  
               time(s)
          %\end{tabular}
          \\
          \hline
          \multirow{4}{*}{CIFAR-10}&
         FedAvg
            & 1 & $ 58.7\pm1.2 $ & 187\\
         \cline{2-5}
         &
         Per-FedAvg
            & 1 & $ 60.5\pm1.9 $ & 339\\
        \cline{2-5}
        &
              MetaGater
              w/o gating
            & 1 & $ \pmb{78.6\pm2.3} $ & 189\\
            \cline{2-5}
         &
              MetaGater 
              w/ gating
             & 1 & $ \pmb{78.4\pm3.5} $ & 356\\
            \hline
            \multirow{4}{*}{CIFAR-100}&
        FedAvg
            & 1 & $ 42.8\pm1.1 $ & 192\\
         \cline{2-5}
         &
         Per-FedAvg
            & 1 & $ 56.6\pm1.6 $ & 367\\
        \cline{2-5}
        &
               MetaGater  
               w/o gating
            & 1 & $ \pmb{68.4\pm2.1} $ & 208\\
                    \cline{2-5}
         &
               MetaGater 
               w/ gating
             & 1 & $ \pmb{68.6\pm4.5} $ & 399\\
            \hline
    \end{tabular}
    \end{center}
    \caption{Accuracy comparison for MetaGater, Per-FedAvg, FedAvg on CIFAR-10 and CIFAR-100 with small local datasets. Clearly, MetaGater achieves the best accuracy among all methods. }
    \label{tab:fedmeta-small}
\end{table}

\begin{table}
    \begin{center}
    \begin{tabular}{c|cccc}
         \toprule
         Dataset &
          Method &
           Accuracy(\%) &
          Sparsity(\%) & 
               Learning  
               time(s)\\
          \hline
          \multirow{3}{*}{CIFAR-10}&
         MetaGater w/o gating
            & $ \pmb{78.6\pm2.3} $ & $ 0 $ & 0.56 \\
         \cline{2-5}
         &
         MetaSNIP
            & $ 76.7\pm2.4 $ & $ 17\pm1.3 $ & 0.55\\
        \cline{2-5}
         &
              MetaGater 
              w/ gating
             & $ \pmb{78.4\pm3.5} $ & $ 17\pm2.1 $ & 0.51\\
            \hline
            \multirow{3}{*}{CIFAR-100}&
        MetaGater w/o gating
            & $ \pmb{68.4\pm2.1} $ & $ 0 $ & 0.61 \\
         \cline{2-5}
         &
         MetaSNIP
            & $ 66.9\pm3.7 $ & $ 21\pm2.3 $ & 0.57\\
        \cline{2-5}
         &
               MetaGater 
               w/ gating
             & $ \pmb{68.6\pm4.5} $ & $ 21\pm3.6 $ & 0.52 \\
            \hline
    \end{tabular}
    \end{center}
    \caption{Fast adaptation performance comparison on CIFAR-10 and CIFAR-100 with small local datasets. Compared with MetaSNIP, MetaGater has a better accuracy and a larger sparsity range. }
    \label{tab:adapt-small}
\end{table}

\section{Channel Gating Module}
% \linote{first describe the model architecture:} Fig.\ref{Fig:structure} illustrates the small (Fig.\ref{Fig:structure}(a)) and large (Fig.\ref{Fig:structure}(b)) DNN model respectively. The small model consists of three convolution layers and then follow by  one fully connected layer. The large model expands the number of convolutional layers to six. In practice, each convolution layer follows by a ReLU activation layer. In addition, we also adopt two max pooling and one average pooling layer to shrink the feature map dimension.  \linote{Then explain the channel gating
% module} In terms of the channel gating module, we plug it to the last convolution layer and last two convolution layers for small and large DNN model respectively.
Fig. \ref{Fig:fml_att_2} shows the channel gating module in detail. To generate a binary mask, a straightforward way is to use a binarization function, which utilizes a hard threshold to take binary on/off decision. However, such  a discrete function is non-differential during back-propagation. A widely used solution is straight-through estimator \cite{bengio2013estimating} where the incoming gradient is equal to the outgoing gradient. To better estimate the gradient, we use Gumbel Softmax trick \cite{jang2016categorical}. Specifically, we utilize the hard threshold during forward pass to generate binary masks and the differential softmax function during back-propagation. Note that, the temperature of the Gumbel Softmax is set to be 1 for all the experiments.

 \begin{figure}
\centering
\includegraphics[scale=0.18]{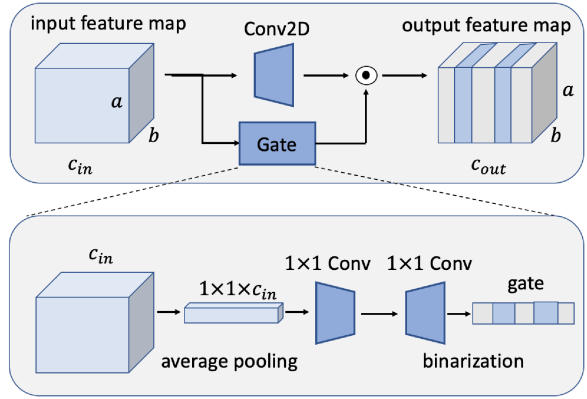}
\caption{The channel gating module for a convolution layer.}
\label{Fig:fml_att_2}
%\vspace{-0.5cm}
\end{figure}

\end{document}